
\documentclass[10pt,twocolumn,letterpaper]{article}

\usepackage[pagenumbers]{iccv} 

\definecolor{cvprblue}{rgb}{0.21,0.49,0.74}
\usepackage[pagebackref,breaklinks,colorlinks,citecolor=cvprblue]{hyperref}

\usepackage{booktabs}
\usepackage{multirow}
\usepackage{tikz}
\usepackage{pgfplots}
\usepackage{makecell}
\usepackage{colortbl}
\usepackage{bm}
\usepackage[most]{tcolorbox}

\usepackage{amsmath}
\usepackage{algorithm}
\usepackage{algpseudocode}
\usepackage{amsfonts}
\usepackage{amssymb}
\usepackage{algcompatible}
\usepackage{mathtools}
\usepackage{bbm}
\newtheorem{definition}{Definition}
\newtheorem{theorem}{Theorem}

\definecolor{cvprblue}{rgb}{0.21,0.49,0.74}
\definecolor{mygrey}{rgb}{ .895,  .895,  .895}
\definecolor{mypink}{rgb}{0.999,  0.414,   0.414}

\definecolor{iccvblue}{rgb}{0.21,0.49,0.74}

%
%


%


\title{Continual Optimization with Symmetry Teleportation for Multi-Task Learning}

\author{Zhipeng Zhou$^{1}$, \quad Ziqiao Meng$^{2}$, \quad Pengcheng Wu$^{1}$, \quad Peilin Zhao$^{3}$, \quad Chunyan Miao$^{1}$\\
$\ ^{1}$Nanyang Technological University, \quad $\ ^{2}$The Chinese University of Hong Kong, \quad $\ ^{3}$Tencent AI Lab 
}

\usepackage{pgfplots}
\pgfplotsset{compat=1.7}

\begin{document}
\maketitle
\begin{abstract}
Multi-task learning (MTL) is a widely explored paradigm that enables the simultaneous learning of multiple tasks using a single model. Despite numerous solutions, the key issues of optimization conflict and task imbalance remain under-addressed, limiting performance. Unlike existing optimization-based approaches that typically reweight task losses or gradients to mitigate conflicts or promote progress, we propose a novel approach based on \textbf{C}ontinual \textbf{O}ptimization with \textbf{S}ymmetry \textbf{T}eleportation (\texttt{COST}). During MTL optimization, when an optimization conflict arises, we seek an alternative loss-equivalent point on the loss landscape to reduce conflict. Specifically, we utilize a low-rank adapter (LoRA) to facilitate this practical teleportation by designing convergent, loss-invariant objectives. Additionally, we introduce a historical trajectory reuse strategy to continually leverage the benefits of advanced optimizers. Extensive experiments on multiple mainstream datasets demonstrate the effectiveness of our approach. \texttt{COST} is a plug-and-play solution that enhances a wide range of existing MTL methods. When integrated with state-of-the-art methods, \texttt{COST} achieves superior performance.
\end{abstract}    
\section{Introduction}
\label{sec:intro}
Traditional machine learning typically requires separate models for each task, leading to higher computational and storage demands as the number of tasks increases. To overcome this issue, multi-task learning (MTL) offers an efficient approach, enabling the simultaneous learning of multiple tasks using a single model. Recent developments in MTL methods can be broadly divided into two categories: structure-based~\cite{heuer2021multitask,ye2022taskprompter,chen2023mod} and optimization-based~\cite{sener2018multi,yu2020gradient,liu2024famo}. Structure-based methods focus on designing architectures that enhance task learning by utilizing task relationships and promoting individual progress. On the other hand, optimization-based methods prioritize the learning process by addressing challenges such as gradient conflicts and task imbalances. Since this paper concentrates on optimization-based methods, our analysis and comparisons will primarily focus on these approaches.

\begin{figure}
    \centering
    \includegraphics[width=\linewidth]{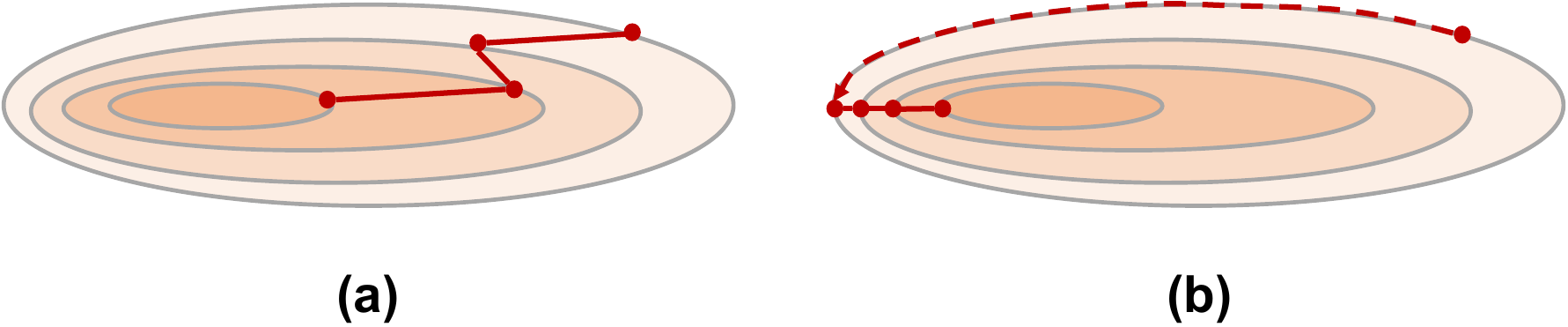}
    \caption{The illustration of symmetry teleportation. (a) is the original gradient descent. (b) is the gradient descent with a faster convergence rate after teleporting the start point from (a).}
    \label{fig:tele_illu}
\end{figure}

Optimization-based MTL aims to resolve the aforementioned issues by re-weighting on various aspects. For example, a series of studies~\cite{sener2018multi,yu2020gradient,liu2021conflict} explore different gradient combinations to prevent improving some tasks while sacrificing others. Another group of works~\cite{chen2018gradnorm,liu2024famo} re-weight task loss to ensure fair progress for individual tasks and thereby address the imbalance issue. While the former group of works endeavors to balance conflict and imbalance issues, the latter focuses on attaining balanced individual progress with minimal concern for the conflict issue. As a result, the latter is generally less robust in different scenarios compared to the former according to empirical observations~\cite{chen2018gradnorm}. However, the former also struggles to achieve a proper balance. In this paper, distinct from these two paradigms and based on the definition of Pareto dominance, we approach MTL from a new perspective, i.e., seeking the less conflicting and more convergent point through symmetry teleportation during MTL optimization. 

Unlike the traditional gradient descent regime, symmetry teleportation aims to accelerate the optimization process by seeking another point within the same loss level set, as depicted in Figure~\ref{fig:tele_illu}. Several recent works~\cite{armenta2023neural,zhao2022symmetry,zhao2023improving} have explored optimization through symmetry teleportation. For example, \cite{zhao2022symmetry} introduces a simple teleportation algorithm for non-linear neural networks, based on the assumption that activation functions are bijective, and seeks the point of maximal gradient magnitude using gradient ascent. However, these methods do not provide practical algorithms for larger, modern neural networks, primarily due to their reliance on strict assumptions about non-linearity and computational intensity (Section~\ref{sec:moti_2}). These limitations are especially pronounced in more complex tasks, e.g., MTL.

Therefore, in this paper, we aim to develop a practical symmetry teleportation method that is applicable for modern deep models, and addressing MTL issues. Specifically, we leverage the low-rank adapter (LoRA) to realize teleportation when encountered with the conflicts issue. By designing the objectives to ensure the invariant task loss and promote progress, we are able to further extend the boundaries of individual task learning for MTL models in a balanced manner. Besides, we also design a historical trajectory reuse strategy to continually benefit from advanced optimizer (e.g., Adam). 
In a nutshell, our contribution can be summarized as follows:
\begin{itemize}
    \item We approach MTL from a new angle, i.e., symmetry teleportation, and empirically verify its applicability for MTL (Section~\ref{sec:moti_1}).
    \item A new practical teleportation method \texttt{COST} is proposed for mitigating the conflict and imbalance issue. To the best of our knowledge, we are the first to develop a practical teleportation method for non-small deep models, specifically for MTL.
    \item By proposing a historical trajectory reuse strategy, we can continually benefit from the advanced optimizer (e.g., Adam and its variants).
    \item Taking the advanced method as the baseline, our \texttt{COST} can well augment it to achieve state-of-the-art (SOTA) performance across diverse evaluations. Besides, we also equip mainstream MTL methods with \texttt{COST}, and showing its plug-and-play property.
\end{itemize}
\section{Related Work}
\label{sec:related}

\subsection{Optimization-based MTL}
Optimization-based methods aim to optimize multiple tasks simultaneously by enhancing the gradient-based learning process itself. For example, MGDA~\cite{sener2018multi} reduces conflicts between task gradients by combining them using the Frank-Wolfe algorithm~\cite{jaggi2013revisiting} to generate a gradient with minimal norm. PCGrad~\cite{yu2020gradient} addresses gradient conflicts by projecting gradients from different tasks onto directions that minimize interference. CAGrad~\cite{liu2021conflict} attempts to balance global optimization and task-specific performance, maintaining both Pareto efficiency and overall optimization with the assistance of a hyperparameter. Nash-MTL~\cite{navon2022multi} introduces a game-theoretic approach where tasks negotiate to update parameters in a manner that enables balanced progression across tasks. Additionally, MoCo~\cite{fernando2022mitigating} focuses on correcting biases in gradient direction by tracking parameters during the learning process, improving gradient alignment and task performance. FairGrad~\cite{ban2024fair} is a pioneering MTL algorithm that puts forward fairness measurements to facilitate maximal loss reduction. It can be considered as an advanced version of Nash-MTL, being capable of balancing task progress in a more fine-grained manner.

\subsection{Symmetry Teleportation for Deep Model}
Before presenting some recent works on symmetry teleportation, we first provide its definition here as per ~\cite{zhao2022symmetry}. Let $\mathcal{L}(\bm{\theta})$ be the loss function. Here, $\mathbb{R}^{d}$ denotes the model's parameter space, and $A$ represents the acting space on the parameters that leaves the loss value unchanged. Subsequently, we have the following definition:
\begin{align} \label{eqn:1}
\mathcal{L}(\bm{\theta})=\mathcal{L}(a\cdot \bm{\theta}),\quad\forall a\in A,\quad\forall \bm{\theta}\in\mathbb{R}^{d}.
\\   \label{eqn:2}
\bm{\theta}' = a\cdot \bm{\theta},\quad a=\underset{a\in A}{\operatorname{argmax}}\left\|\nabla \mathcal{L}(a\cdot \bm{\theta})\right\|^2.
\end{align}
we can observe that symmetry teleportation aims to find a loss-invariant point (Eqn.~\ref{eqn:1}) with a maximum gradient norm (Eqn.~\ref{eqn:2}) on the loss level set by acting with a group element. 

As a recent research topic, symmetry teleportation has been explored in only a few works~\cite{armenta2023neural,zhao2022symmetry,zhao2023improving}. ~\cite{armenta2023neural} first introduced the concept of `neural teleportation' and investigated its impact on optimization. ~\cite{zhao2022symmetry} proposed a gradient ascent-based teleportation algorithm for small neural networks (e.g., three-layer MLPs). And~\cite{zhao2023improving} established the connection between symmetry teleportation and generalization through a series of theoretical analyses and provided an alternative for enhancing the meta optimizer.

\subsection{Low-Rank Adapter}
LoRA is gaining increasing popularity in tandem with the rapid advancement of foundation models and parameter-efficient fine-tuning (PEFT). It operates by maintaining the pre-trained weights of a large model in a fixed state and incorporating small, trainable rank decomposition matrices. During fine-tuning, rather than modifying all the parameters of the model, only these low-rank matrices are subject to update.

Moreover, LoRA has several variants that can attain dynamic rank~\cite{zhang2303adaptive}, or quantization~\cite{dettmers2024qlora}. For instance, AdaLoRA ~\cite{zhang2303adaptive} adaptively assigns dynamic rank to different parameters, thereby enabling the capture of important updates while preserving efficiency. In contrast, QLoRA~\cite{dettmers2024qlora} introduces 4-bit NormalFloat, double quantization, and paged optimizers to more effectively optimize LoRA, while significantly reducing the required memory.

\noindent \textbf{Connection and Difference}: Our work tackles conflict and imbalance issues in optimization-based MTL through symmetry teleportation. Specifically, we utilize LoRA to implement practical teleportation. In contrast to previous studies, we explore MTL from a novel perspective and introduce a new teleportation algorithm for modern deep models. This algorithm is scalable, easily integratable, and compatible with both PEFT and MTL.
\section{Motivation and Empirical Observation}
\subsection{Preliminary}
As mentioned, optimization-based MTL approaches operate under the assumption that the model consists of a task-shared backbone network alongside task-specific branches. Consequently, the primary objective of these approaches is to devise gradient combination strategies that optimize the backbone network to yield benefits across all tasks. Let us consider a scenario where there are $K \ge 2$ tasks available, each associated with a differentiable loss function $\mathcal{L}_i(\bm{\theta})$, where $\bm{\theta}$ represents the task-shared parameters. The goal of optimization-based MTL is to search for the optimal $\bm{\theta^*} \in \mathbb{R}^m$ that minimizes the losses for all tasks. 


\begin{definition}[\textbf{Gradient Similarity}] \label{def:conf}
    Denote $\phi_{ij}$ as the angle between two task gradients $\bm{g_i}$ and $\bm{g_j}$, and assume $\|\bm{g_i}\|_2 \le \|\bm{g_j}\|_2$, then we define the gradient similarity as $\cos \phi_{ij}$ and the gradients as conflicting when $\cos \phi_{ij} < 0$ (referred as \textbf{Weak Conflict}). When the mean gradient $\bm{g_0}$ is conflicting with $\bm{g_i}$, we call it as \textbf{Dominated Conflict} (see Figure~\ref{fig:conflict_imb}).  
\end{definition}
\begin{figure}[h]
    \centering
    \includegraphics[width=\linewidth]{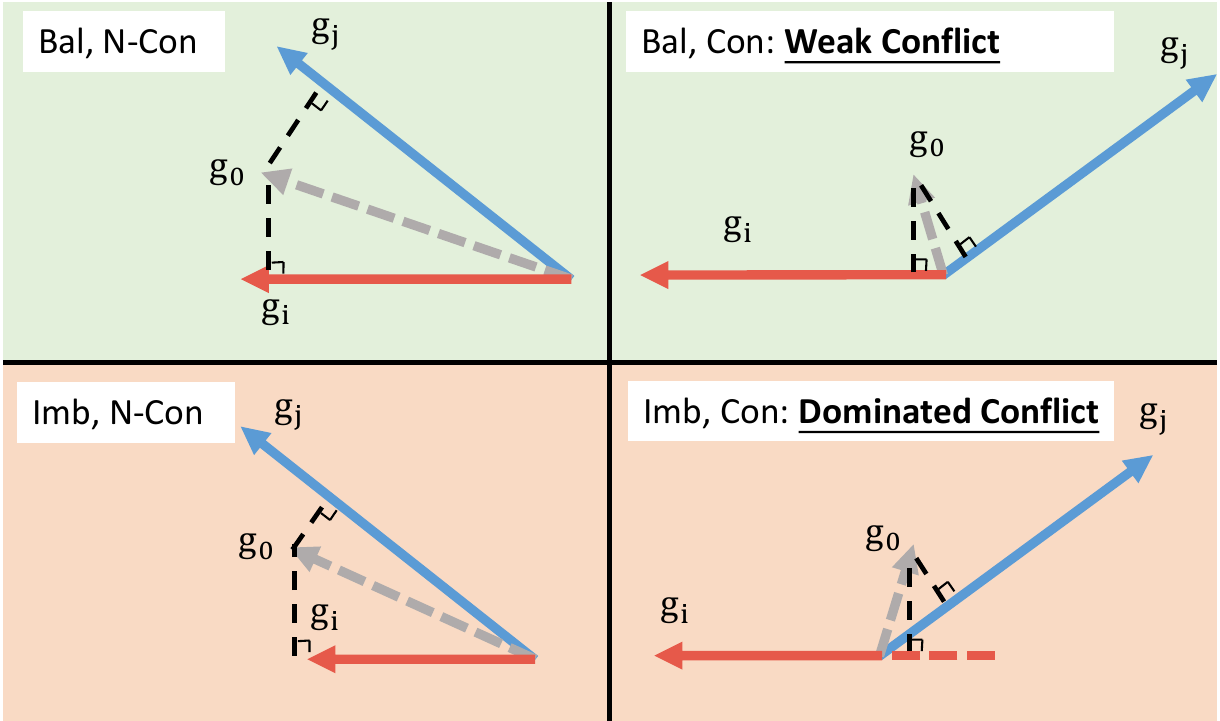}
    \caption{Illustration of conflict and imbalance issues in MTL. `Bal' and `Imb' represent balanced and imbalanced, while `N-Con' and `Con' represent non-conflicting and conflicting.}
    \label{fig:conflict_imb}
\end{figure}

\subsection{Applicability of Symmetry Teleportation} \label{sec:moti_1}
\begin{figure}[h]
    \centering
    \subfloat[CAGrad]{\includegraphics[width = 0.23\textwidth]{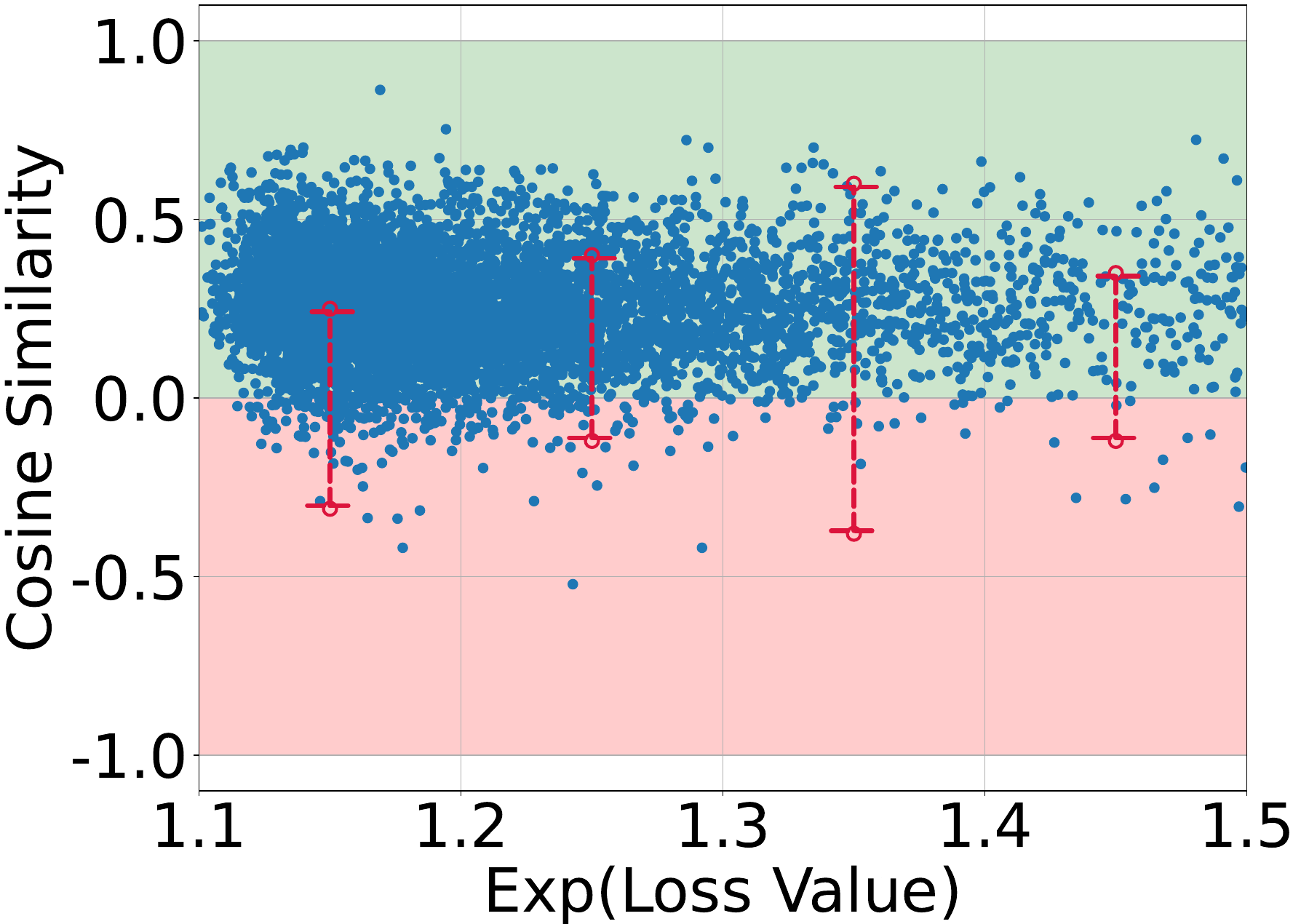}}
    \hfill
    \subfloat[FairGrad]{\includegraphics[width = 0.23\textwidth]{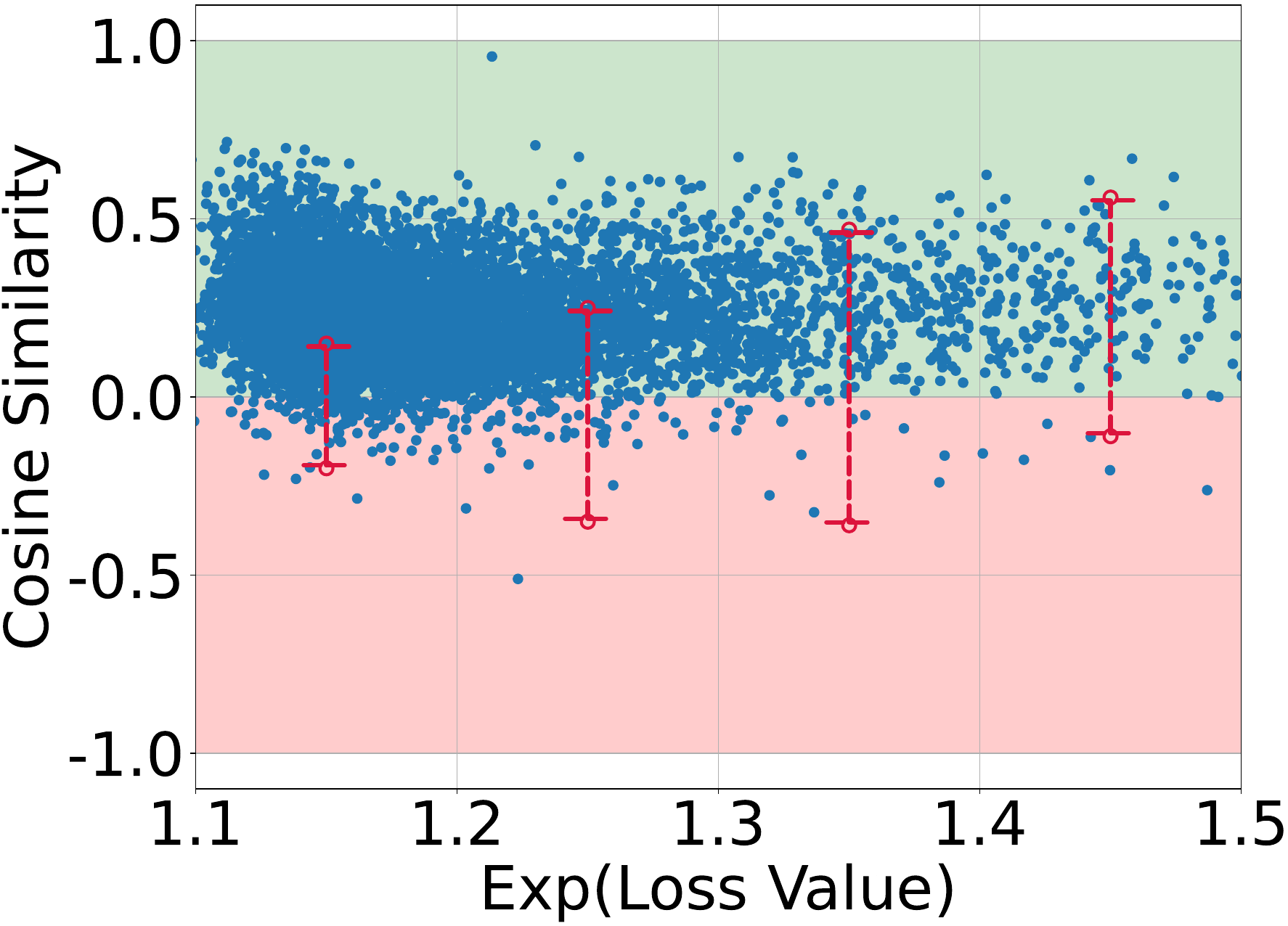}}
    \caption{Dominated conflict vs. loss examination. The pink backdrop designates the conflicting area, whereas the green backdrop indicates the non-conflicting area. The blue scatter points are the individual recorded points throughout the optimization process. The red dashed line symbolizes the teleportation occurring from a conflict point to a non-conflict point. An exponential amplification has been applied to the loss values to enhance visual clarity.}
    \label{fig:examination}
\end{figure}

Before delving into the principal design of our method, it is necessary to verify the existence of parameter symmetries with differing conflict statuses. To this end, we examine the optimization process of mainstream MTL approaches. We analyze the mean loss across all tasks and the associated conflict status during optimization from various initial points, with the results presented in Figure~\ref{fig:examination}.

As shown in Figure~\ref{fig:examination}, it is often possible to identify a non-conflict alternative at the same loss level when encountering conflict, demonstrating the potential of symmetry teleportation. Additional statistical results from other MTL approaches are provided in the \textbf{Appendix} (Sec. 1.1).

\subsection{Pitfall of Current Paradigms} \label{sec:moti_2}
While several works~\cite{armenta2023neural,zhao2022symmetry,zhao2023improving} have proposed symmetry teleportation algorithms for neural network-based models, we demonstrate their limitations with current deep models. First, these algorithms require activation functions to be bijective, which poses a significant challenge for widely-used deep models (e.g., ResNet-50) that use non-bijective activation functions, e.g., ReLU and Sigmoid. Second, they require calculating the pseudo-inverse of inputs layer by layer to ensure output and loss invariance. This process is computationally intensive and may be impractical for modern deep models. As a result, these approaches have only been tested on simple three-layer MLP networks and small-scale datasets (e.g. MNIST) for verification.


\begin{figure*}
    \centering
    \includegraphics[width=0.8\linewidth]{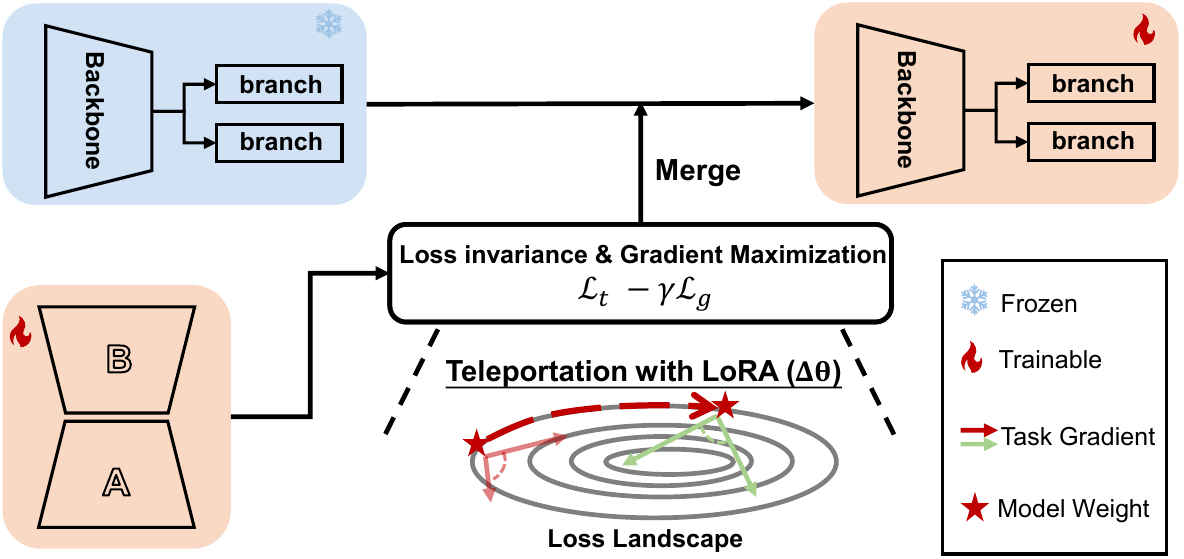}
    \caption{The Illustration of \texttt{COST}. Here, we depict a one-time teleportation procedure by using a 2-task example for the sake of illustration. It is worth noting that LoRA is only applied to the shared backbone.}
    \label{fig:cot_overview}
\end{figure*}
\section{Principal Design}
In this section, we present the detailed design of \texttt{COST}, incorporating the symmetry teleportation paradigm and a historical trajectory reuse strategy. We also provide an analysis of convergence.

\subsection{Continual Optimization with Symmetry Teleportation}
The overall framework of \texttt{COST} is depicted in Figure~\ref{fig:cot_overview}. At a certain training stage $t$, we utilize LoRA to teleport the weight of the shared backbone to the non-conflict point (merge the trained LoRA into the backbone's weight) with the same loss level. Subsequently, the model (including both the backbone and branches) is continuously optimized by other MTL algorithms. In this framework, there are two questions that need to be answered:

\noindent\textbf{\underline{When}}: The first question is, when should teleportation be triggered? Unfortunately, the previous solutions presented in ~\cite{armenta2023neural,zhao2022symmetry,zhao2023improving} did not offer a clear answer to this question. They merely triggered it in a random or intuitive manner. In contrast, our goal is to address two key challenges in MTL: conflict and imbalance, challenges that are not concurrently addressed by existing solutions. Moreover, a na\"{\i}ve linear scalarization (LS) strategy can effectively promote all tasks, as illustrated in Figure~\ref{fig:conflict_imb} and has been empirically verified in~\cite{xin2022current}. Thus, the primary challenge lies in resolving conflict arising from imbalance, i.e., dominated conflict. Therefore, we establish the teleportation trigger condition based on the occurrence of dominated conflict~\footnote{We have provided a comparison between dominated conflict and weak conflict in the \textbf{Appendix} (Sec. 1.2).}:
\begin{align} \label{eqn:3}
    \cos \phi_{i0} < 0,\ \ \  \phi_{i0} = \angle (\bm{g_i}, \bm{g_0})
\end{align}
where $\bm{g_i}$ and $\bm{g_0}$ represent the task gradient with the smallest norm and the mean gradient, respectively. However, when handling a large number of tasks, dominated conflicts become inevitable, reaching a 97\% conflict ratio per epoch on CelebA~\cite{liu2015deep}, as shown in Figure~\ref{fig:examination}(a). Then if we still employ dominated conflict as the trigger condition, frequent teleportation would occurs and results in inefficiency. Therefore, our objective shifts to mitigating dominant conflicts, balancing efficiency and effectiveness. To achieve this, we adopt the following condition:
\begin{align} \label{eqn:4}
    \sum_i^K \mathbbm{1} [\cos\phi_{i0} < 0] \ge \left \lceil \frac{K}{2}  \right \rceil
\end{align}
\begin{figure}
    \centering
    \subfloat[Conflict ratio examination]{\includegraphics[width = 0.235\textwidth]{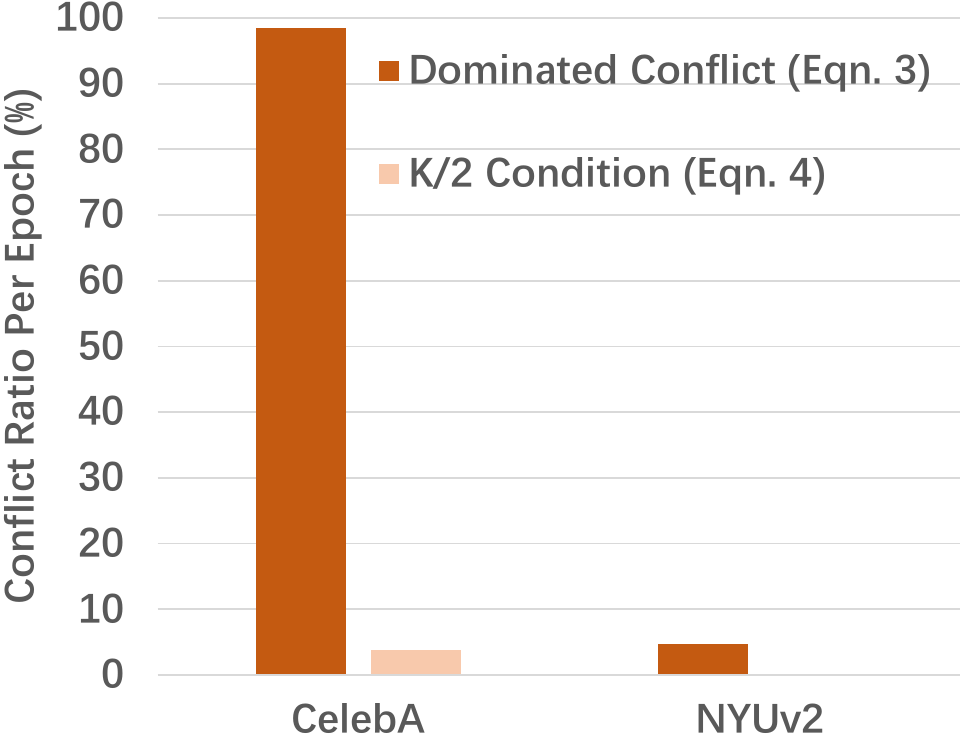}}
    \hfill
    \subfloat[Loss examination]{\includegraphics[width = 0.235\textwidth]{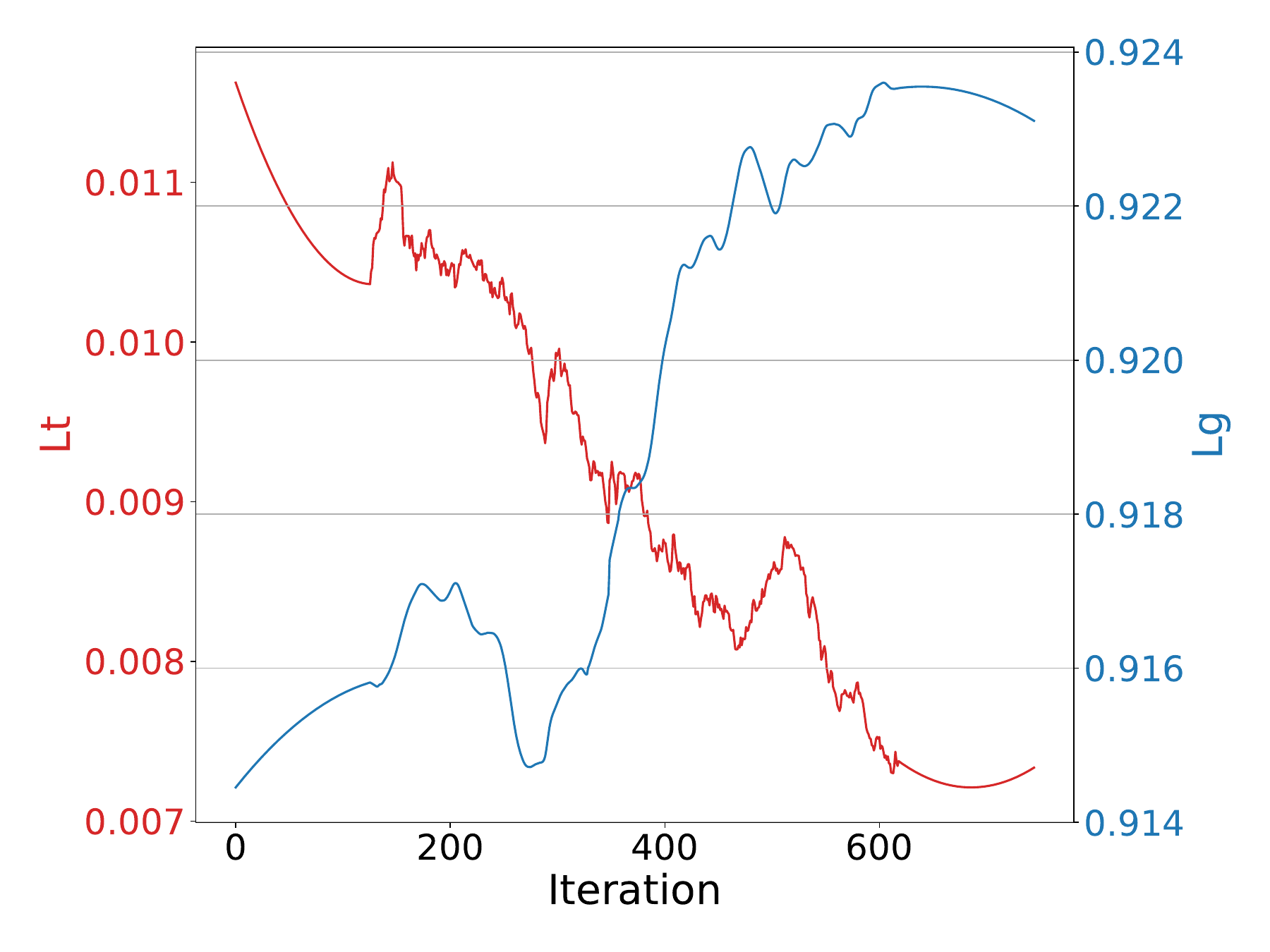}}
    \caption{(a) Conflict ratio per epoch on CelebA (40-task) and NYUv2 (3-task) and (b) loss examinations during a single teleportation.}
    \label{fig:examination}
\end{figure}

Under this condition, the trigger frequency is significantly reduced (see Figure~\ref{fig:examination}(a)) while maintaining effectiveness, as demonstrated in the evaluation. Additionally, we analyze the trade-off between effectiveness and efficiency for this condition in the \textbf{Appendix} (Section 1.2).

\noindent\textbf{\underline{How}}: In the symmetry teleportation paradigm, there are two key objectives: loss invariance and gradient maximization, as outlined in Eqn.~\ref{eqn:1} and Eqn.~\ref{eqn:2}. Since finding a group action $g$ is infeasible for deep models, we instead use LoRA ($\bm{\Delta \theta}$) as an alternative, reformulating it as:
\begin{align} \label{eqn:5}
\mathcal{L}(\bm{\theta})=\mathcal{L}(\bm{\theta} + \Delta \bm{\theta})\quad\quad\quad
\\   \label{eqn:6}
\Delta \bm{\theta}=\underset{\Delta \bm{\theta}}{\operatorname{argmax}}\left\|\nabla \mathcal{L}(\bm{\theta} + \Delta \bm{\theta})\right\|^2.
\end{align}

With respect to the specific symmetry teleportation taking place during the optimization process, in order to ensure the task loss remains invariant, we undertake the minimization of the loss fluctuation in the following way:
\begin{align} \label{eqn:7}
    \mathcal{L}_t = \frac{1}{K} \sum_i^K \left | \mathcal{L}_i - \mathcal{L}_i^* \right | 
\end{align}
where $\mathcal{L}_i$ represents the individual task loss, and $\mathcal{L}_i^*$ is its loss before starting teleportation, which is unchanged during teleportation. 

To maximize the gradient of the target point, a simplistic solution would be to incorporate it into the objective of LoRA optimization. However, two challenges arise when attempting to do so: (1) It is difficult to explicitly incorporate gradient maximization into the objective design. (2) Even if it were possible, the computation of the Hessian matrix would be overly burdensome for LoRA optimization.
On the other hand, upon closely examining Eqn.~\ref{eqn:6}, we can observe that our intention is merely to identify the point with the maximal gradient, rather than precisely attaining the maximal gradient itself. Consequently, we choose to select another metric for measuring the gradient norm, i.e., \textit{Sharpness}~\cite{foret2020sharpness}. Since the negative direction of the gradient is locally the fastest direction of descent, thus we can estimate the gradient by seeking the sharpest direction at $\bm{\theta^*}$ as follow:
\begin{align} \label{eqn:sharp}
    {\rm Sharpness} =  \underset{\left \| \bm{\epsilon} \right \| \le \delta }{\operatorname{max}}  \left | \mathcal{L}(\bm{\theta^*} + \bm{\epsilon}) - \mathcal{L}(\bm{\theta^*}) \right |                 
\end{align}
where $\delta$ is the radius of the sphere. We further implement Eqn.~\ref{eqn:sharp} by randomly sampling $\bm{\epsilon}$ $\tilde{n}$ times from the sphere, and estimating sharpness by selecting the maximum one. Since $\mathcal{L}(\bm{\theta^*})$ remains unchanged for each sampling operation, we obtain the following objective:
\begin{align} \label{eqn:9}
    \mathcal{L}_g = \operatorname{max}\left \{\left | \frac{1}{K}\sum_i^K  R_i \cdot \mathcal{L}_i(\bm{\theta} + \Delta \bm{\theta} + \bm{\epsilon_j}) \right | \right \}_{j=1}^{\tilde{n}}
\end{align} 
\begin{align} \label{eqn:weights}
\mathbf{R} = K \cdot \text{softmax} \left( \left[ \frac{\sum_{j=1}^{K} \|\bm{g_j}\|}{\|\bm{g_i}\|} \right]_{i=1}^{K} \right)
\end{align}
where $\mathbf{R}$ is computed to facilitate the search for more balanced alternatives, mitigating imbalance issues. Consequently, the overall objective for LoRA optimization can be formulated as follows:
\begin{align} \label{eqn:10}
    \mathcal{L}_{lora} = \mathcal{L}_t - \gamma \mathcal{L}_g
\end{align}
where $\gamma$ is the hyper-parameter. As depicted in Figure~\ref{fig:examination}(b), $\mathcal{L}_t$ largely decreases while $\mathcal{L}_g$ increases as expected during the teleportation. 
\begin{figure}
    \centering
    \subfloat[LS.]{\includegraphics[width = 0.23\textwidth]{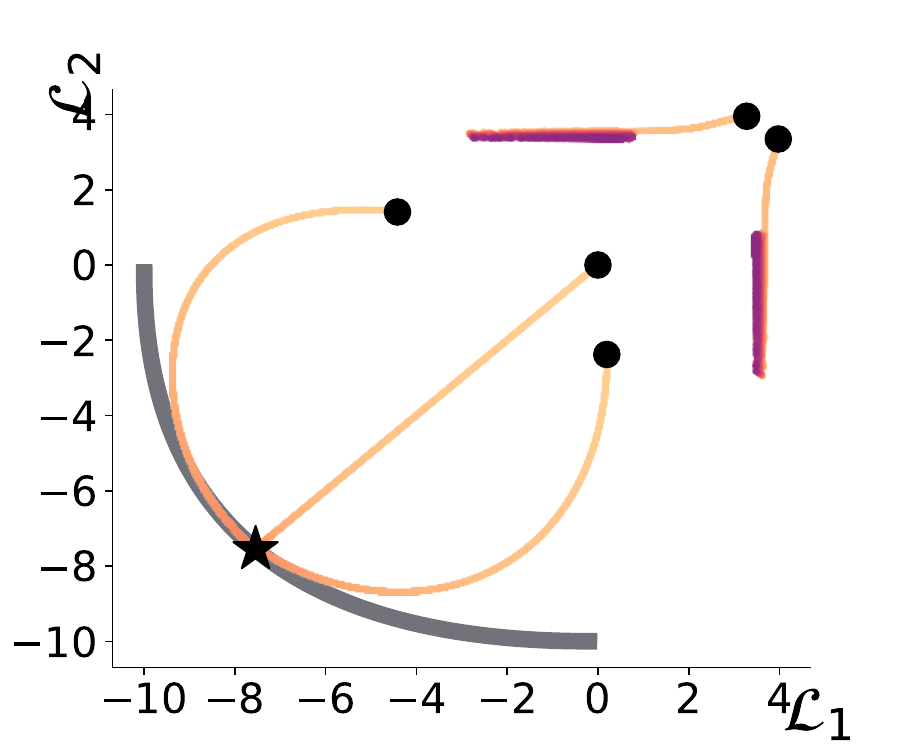}}
    \hfill
    \subfloat[LS + \texttt{COST}.]{\includegraphics[width = 0.23\textwidth]{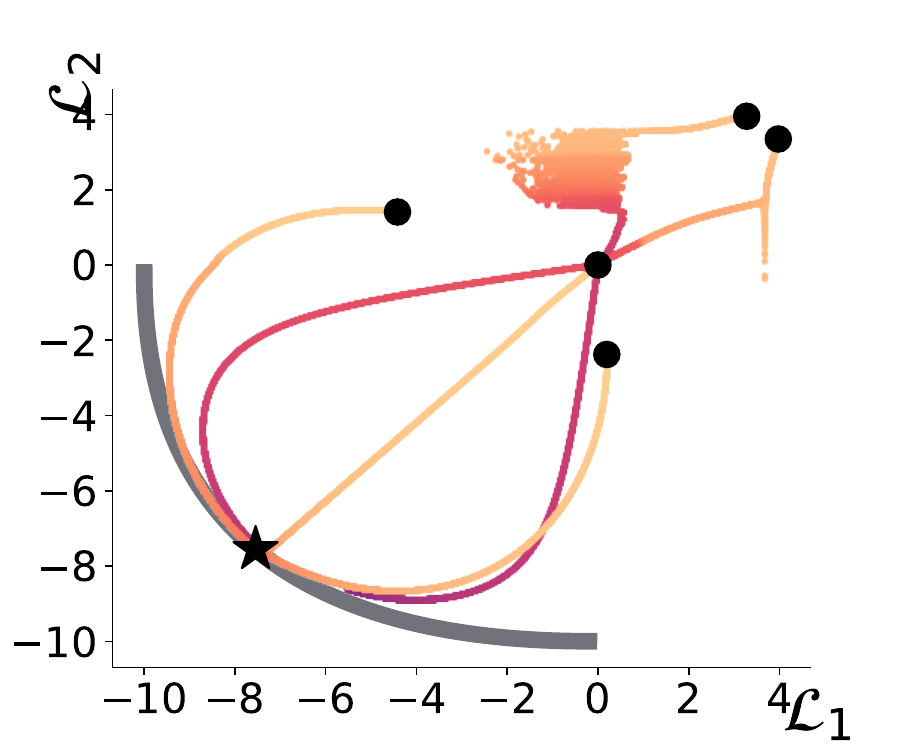}}
    \caption{Trajectory illustration on toy examples.}
    \label{fig:toy}
\end{figure}

To enhance understanding of our approach, we provide a trajectory illustration on toy examples~\cite{liu2021conflict} in Figure~\ref{fig:toy}. As shown, LS may fail to reach the Pareto front from certain initializations due to conflict issues. However, with the augmentation of \texttt{COST}, it successfully explores alternative paths for continuous optimization rather than getting stuck.

\subsection{Convergence Analysis}
In this section, we present a convergence analysis to further enhance the understanding of the applicability of our proposed method. By formulating a theorem, it has been proven that our method converges to the Pareto stationary point with guarantee.
 \begin{theorem} Assume task loss functions $\mathcal{L}_1, ..., \mathcal{L}_{K}$ are differentiable and $\Lambda$-smooth ($\Lambda$$>$0) such that $\left \| \nabla\mathcal{L}_i(\bm{\theta_1}) - \nabla\mathcal{L}_i(\bm{\theta_2}) \right \| \le \Lambda \left \| \bm{\theta_1} - \bm{\theta_2} \right \| $ for any two points $\bm{\theta_1}$, $\bm{\theta_2}$, and our symmetry teleportation property holds. Set the step size as $\eta = \frac{1}{\Lambda \sqrt{T-1}}$, T is the training iteration. Then, there exists a subsequence $\{\bm{\theta^{t_j}}\}$ of the output sequence $\{\bm{\theta^t}\}$ that converges to a Pareto stationary point $\bm{\theta^*}$.  \label{thm:1}
 
 \end{theorem}
The proof of this theorem is provided in the \textbf{Appendix} (Sec. 4).

\subsection{Historical Trajectory Reuse Strategy}
When training MTL models with advanced optimizers, a minor issue arises after reaching the loss-invariant point through our symmetry teleportation. Specifically, the teleportation process disrupts the continuous optimization flow, preventing the MTL model from leveraging its historical trajectory—one of the key advantages of advanced optimizers (e.g., Adam~\cite{kingma2014adam}).

\begin{figure}[h]
    \centering
    \includegraphics[width=0.8\linewidth]{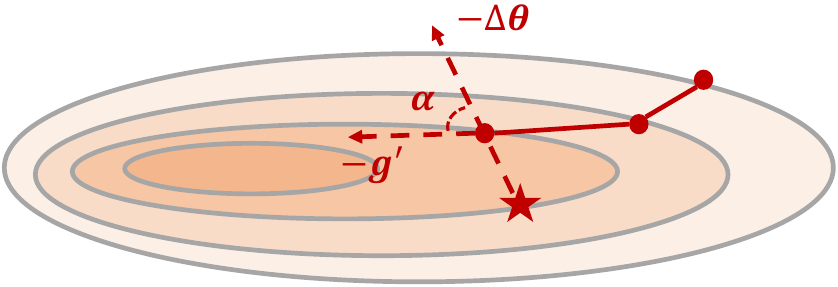}
    \caption{The illustration of HTR strategy. The red star represents the teleported point. $\bm{g'}$ is the gradient at the pre-teleportation point, and $\alpha$ is the angle between $\bm{g'}$ and $\bm{\Delta \theta}$.}
    \label{fig:htr}
\end{figure}
Taking the Adam optimizer as an example, which is commonly utilized in mainstream MTL approaches~\cite{liu2021conflict,liu2024famo,navon2022multi}. It employs an exponentially weighted moving average to estimate the momentum ($\bm{v_t}$) and quadratic moments ($\bm{s_t}$) of the gradient (historical trajectory). However, when the model is teleported to another point, the stored historical trajectory would supply misleading information for the current model optimization. To tackle this issue, we partially preserve the historical trajectory by computing the correlation between teleportation and previous updating (as depicted in Figure~\ref{fig:htr}):
\begin{align} \label{eqn:htr_cos}
    \sigma = \operatorname{cos\_sim}(\bm{\Delta \theta}, \bm{g'})
\end{align}
where $\operatorname{cos\_sim}$ is the function of computing cosine similarity, and $\bm{g'}$ is the gradient at the pre-teleportation point. In this way, the historical trajectory can be modulated and reused according to Eqn.~\ref{eqn:12}. After that, $\sigma$ is still set to 1, as is typically the case with the Adam optimizer.
\begin{align} \label{eqn:12}
    \bm{v_t}&=\sigma\beta_1 \bm{v_{t - 1}}+(1-\sigma\beta_1)\bm{g_t}\\
    \bm{s_t}&=\sigma\beta_2 \bm{s_{t - 1}}+(1-\sigma\beta_2)\bm{g_t}^2 \nonumber
\end{align}
The overall training algorithm is concluded in the \textbf{Appendix} (Section 3). 

\section{Performance Evaluation}
In this section, we initially evaluate our method using mainstream MTL benchmarks and compare it with the following baselines: Linear Scalarization (LS), Scale-Invariant (SI), Random Loss Weighting (RLW) as described in ~\cite{lin2021reasonable}, Dynamic Weight Average (DWA) from ~\cite{liu2019end}, Uncertainty Weighting (UW) detailed in ~\cite{kendall2018multi}, MGDA from ~\cite{sener2018multi}, GradDrop presented in ~\cite{chen2020just}, PCGrad as in ~\cite{yu2020gradient}, CAGrad from ~\cite{liu2021conflict}, IMTL detailed in ~\cite{liu2021towards}, Nash-MTL from ~\cite{navon2022multi}, FAMO described in ~\cite{liu2024famo}, and FairGrad from ~\cite{ban2024fair}. Subsequently, we offer some additional analyses regarding conflict and gradient examinations, ablation studies, and plug-and-play verification, etc., to further enhance understanding. We also provide additional analysis on alternatives to PEFT (Sec. 1.3), and time cost (Sec. 1.4) in the \textbf{Appendix}. All experiments are carried out on a single Tesla V100 GPU. For more experimental details, please refer to the \textbf{Appendix} (Sec. 2). Code will be released once this paper is accepted.

\noindent \textbf{Evaluation Metric}. 
In addition to reporting individual performance, we also incorporate a widely used metric, \textbf{$\Delta$m\%}~\cite{maninis2019attentive}, which evaluates the overall degradation compared to independently trained models that are considered as the reference oracles. The formal definition of $\Delta$m\% is given as: 
\begin{align}
     \textbf{$\Delta$m\%} = \frac{1}{K}\sum_{k=1}^K (-1)^{\delta _k}\frac{M_{m,k} - M_{b,k}}{M_{b,k}} \times 100
\end{align} 
where $M_{m,k}$ and $M_{b,k}$ represent the metric $M_k$ for the compared method and the independent model, respectively. The value of $\delta _k$ is assigned as 1 if a higher value is better for $M_k$, and 0 otherwise. Besides, we also report another popular metric named \textbf{Mean Rank} (\textbf{MR}), which computes the average ranks of each methods across all tasks.
\begin{table}[t]
\setlength\tabcolsep{2pt}
\centering
\caption{\textbf{Scene understanding} (\textit{CityScapes}, 2 tasks).}
\label{table:city}
\footnotesize
\begin{tabular}{lllll|ll}
\hline \toprule
\multicolumn{1}{l}{\multirow{3}{*}{Method}} &
  \multicolumn{2}{c}{Segmentation} &
  \multicolumn{2}{c|}{Depth} &   
  \multicolumn{1}{c}{\multirow{3}{*}{\textbf{MR $\downarrow$}}} &
  \multicolumn{1}{c}{\multirow{3}{*}{\textbf{$\Delta$m\% $\downarrow$}}} \\ \cmidrule(r){2-3} \cmidrule(r){4-5}
\multicolumn{1}{c}{} &
  \multicolumn{2}{c}{\multirow{1}{*}{(Higher Better)}} &
  \multicolumn{2}{c|}{\multirow{1}{*}{(Lower Better)}} &
  
\\ \cmidrule(r){2-3} \cmidrule(r){4-5} 
\multicolumn{1}{c}{} &
  \multicolumn{1}{c}{mIoU} &
  Pix. Acc. &
  \multicolumn{1}{c}{Abs. Err.} &
  Rel. Err. &
  \multicolumn{1}{c}{} \\ \cmidrule(r){1-7} 
Independent &
  \multicolumn{1}{c}{74.01}  &
                     93.16   &                  
  \multicolumn{1}{c}{0.0125} &
                     27.77  &                 
                     -      &                 
                     -
   \\ \cmidrule(r){1-7}  
LS &
  \multicolumn{1}{c}{75.18}  &
                     93.49   &
  \multicolumn{1}{c}{0.0155} &
                     46.77  &
                     8.25       &
                     22.60
   \\ 
RLW~\cite{lin2021reasonable} &
  \multicolumn{1}{c}{74.57}  &
                     93.41   &
  \multicolumn{1}{c}{0.0158} &
                     47.79  &
                     11.00       &
                     24.37
   \\ 
DWA~\cite{liu2019end} &
  \multicolumn{1}{c}{75.24}  &
                     93.52   &
  \multicolumn{1}{c}{0.0160} &
                     44.37  &
                     8.25        &
                     21.43
   \\ 
Uncertainty~\cite{kendall2018multi} &
  \multicolumn{1}{c}{72.02}  &
                     92.85   &
  \multicolumn{1}{c}{0.0140} &
                     \cellcolor{mygrey}30.13  &
                     7.50        &
                     5.88
   \\ 
MGDA~\cite{sener2018multi} &
  \multicolumn{1}{c}{68.84}  &
                     91.54   &
  \multicolumn{1}{c}{0.0309} &
                     33.50  &
                     11.00       &
                     44.14
   \\ 
GradDrop~\cite{chen2020just} &
  \multicolumn{1}{c}{75.27}  &
                     93.53   &
  \multicolumn{1}{c}{0.0157} &
                     47.54  &
                     7.75        &
                     23.67
   \\ 
PCGrad~\cite{yu2020gradient} &
  \multicolumn{1}{c}{75.13}  &
                     93.48   &
  \multicolumn{1}{c}{0.0154} &
                     42.07  &
                     8.50        &
                     18.21
\\
CAGrad~\cite{liu2021conflict} &
  \multicolumn{1}{c}{75.16}  &
                     93.48   &
  \multicolumn{1}{c}{0.0141} &
                     37.60  &
                     7.50        &
                     11.58
   \\
IMTL~\cite{liu2021towards} &
  \multicolumn{1}{c}{75.33}  &
                     93.49   &
  \multicolumn{1}{c}{0.0135} &
                     38.41  &
                     5.75        &
                     11.04
   \\  
Nash-MTL~\cite{navon2022multi} &
  \multicolumn{1}{c}{75.41}  &
                   \underline{93.66}   &
  \multicolumn{1}{c}{\cellcolor{mygrey}0.0129} &
                     35.02  &
                     3.00        &
                     6.82
   \\  
FAMO~\cite{liu2024famo} &
  \multicolumn{1}{c}{74.54}  &
                   93.29   &
  \multicolumn{1}{c}{0.0145} &
                     32.59  &
                     8.25        &
                     8.13
   \\  
FairGrad~\cite{ban2024fair} &
  \multicolumn{1}{c}{\underline{75.72}}  &
                     {\cellcolor{mygrey}93.68}   &
  \multicolumn{1}{c}{0.0134} &
                     32.25 &
                     \underline{2.25}       &
                     \underline{5.18}
   \\   
    \cmidrule(r){1-7} 
\texttt{COST} &
  \multicolumn{1}{c}{\cellcolor{mygrey}75.73}  &
                     93.53   &
  \multicolumn{1}{c}{\underline{0.0133}} &
                     \underline{31.53}  &
                    \cellcolor{mygrey}2.00       &
                     \cellcolor{mygrey}4.30 \\   
 \bottomrule \hline
\end{tabular}
\end{table}
\begin{table*}[h]
\setlength\tabcolsep{6pt}
\centering
\caption{\textbf{Scene understanding} (\textit{NYUv2}, 3 tasks). We report MTAN model performance averaged over 3 random seeds. The best scores are provided in \colorbox{gray!20}{gray}, and the second scores are \underline{underlined}.}
\label{table:nyu}
\small
\begin{tabular}{llllllllll|ll}
\hline \toprule
\multicolumn{1}{l}{\multirow{4}{*}{Method}} &
  \multicolumn{2}{c}{Segmentation} &
  \multicolumn{2}{c}{Depth} &
  \multicolumn{5}{c|}{Surface Normal} &   
  \multicolumn{1}{c}{\multirow{4}{*}{\textbf{MR $\downarrow$}}} &
  \multicolumn{1}{c}{\multirow{4}{*}{\textbf{$\Delta$m\% $\downarrow$}}} \\ \cmidrule(r){2-3} \cmidrule(r){4-5} \cmidrule(r){6-10}
\multicolumn{1}{c}{} &
  \multicolumn{2}{c}{\multirow{2}{*}{(Higher Better)}} &
  \multicolumn{2}{c}{\multirow{2}{*}{(Lower Better)}} &
  \multicolumn{2}{c}{Angle Distance}  &
  \multicolumn{3}{c|}{Within $t^\circ$} &
  \multicolumn{1}{c}{} \\ \cmidrule(r){6-7} \cmidrule(r){8-10}
\multicolumn{1}{c}{} &
  \multicolumn{2}{c}{} &
  \multicolumn{2}{c}{} &
  \multicolumn{2}{c}{(Lower Better)} &
  \multicolumn{3}{c|}{(Higher Better)} &
  \multicolumn{1}{c}{} \\ \cmidrule(r){2-3} \cmidrule(r){4-5} \cmidrule(r){6-7} \cmidrule(r){8-10}
\multicolumn{1}{c}{} &
  \multicolumn{1}{c}{mIoU} &
  Pix. Acc. &
  \multicolumn{1}{c}{Abs Err} &
  Rel Err &
  \multicolumn{1}{c}{Mean} &
  \multicolumn{1}{c}{Median} &
  \multicolumn{1}{c}{11.25} &
  \multicolumn{1}{c}{22.5} &
  30 &
  \multicolumn{1}{c}{} \\ \cmidrule(r){1-12} 
Independent &
  \multicolumn{1}{c}{38.30}  &
                     63.76   &                  
  \multicolumn{1}{c}{0.68} &
                     0.28  &                 
  \multicolumn{1}{c}{25.01}  &
  \multicolumn{1}{c}{19.21}  &
  \multicolumn{1}{c}{30.14}  &
  \multicolumn{1}{c}{57.20}  &
                     69.15   &
                     \ \ \ \ -        &
                     \ \ \ \ -
   \\ \cmidrule(r){1-12} 
LS &
  \multicolumn{1}{c}{39.29}  &
                     65.33   &                  
  \multicolumn{1}{c}{0.55} &
                     0.23  &                 
  \multicolumn{1}{c}{28.15}  &
  \multicolumn{1}{c}{23.96}  &
  \multicolumn{1}{c}{22.09}  &
  \multicolumn{1}{c}{47.50}  &
                     61.08   &
                     9.44        &
                     \ 5.46    
   \\
RLW~\cite{lin2021reasonable} &
  \multicolumn{1}{c}{37.17}  &
                     63.77   &                  
  \multicolumn{1}{c}{0.58} &
                     0.24  &                 
  \multicolumn{1}{c}{28.27}  &
  \multicolumn{1}{c}{24.18}  &
  \multicolumn{1}{c}{22.26}  &
  \multicolumn{1}{c}{47.05}  &
                     60.62   &
                     12.22   &
                     \ 7.67
   \\
DWA~\cite{liu2019end} &
  \multicolumn{1}{c}{39.11}  &
                     65.31   &                  
  \multicolumn{1}{c}{0.55} &
                     0.23  &                 
  \multicolumn{1}{c}{27.61}  &
  \multicolumn{1}{c}{23.18}  &
  \multicolumn{1}{c}{24.17}  &
  \multicolumn{1}{c}{50.18}  &
                     62.39   &
                     8.56        &
                     \ 3.49
   \\
Uncertainty~\cite{kendall2018multi} &
  \multicolumn{1}{c}{36.87}  &
                     63.17   &                  
  \multicolumn{1}{c}{0.54} &
                     0.23  &                 
  \multicolumn{1}{c}{27.04}  &
  \multicolumn{1}{c}{22.61}  &
  \multicolumn{1}{c}{23.54}  &
  \multicolumn{1}{c}{49.05}  &
                     63.65   &
                     8.78        &
                     \ 4.01
   \\
MGDA~\cite{sener2018multi} &
  \multicolumn{1}{c}{30.47}  &
                     59.90   &
  \multicolumn{1}{c}{0.61} &
                     0.26  &
  \multicolumn{1}{c}{24.88}  &
  \multicolumn{1}{c}{\underline{19.45}}  &
  \multicolumn{1}{c}{29.18}  &
  \multicolumn{1}{c}{\underline{56.88}}  &
                     \underline{69.36}   & 
                     7.11        &
                     \ 1.47
   \\ 
GradDrop~\cite{chen2020just} &
  \multicolumn{1}{c}{39.39}  &
                     65.12   &
  \multicolumn{1}{c}{0.55} &
                     0.23  &
  \multicolumn{1}{c}{27.48}  &
  \multicolumn{1}{c}{22.96}  &
  \multicolumn{1}{c}{23.38}  &
  \multicolumn{1}{c}{49.44}  &
                     62.87   &
                     8.89        &
                     \ 3.61
   \\ 
PCGrad~\cite{yu2020gradient} &
  \multicolumn{1}{c}{38.06}  &
                     64.64   &
  \multicolumn{1}{c}{0.56} &
                     0.23  &
  \multicolumn{1}{c}{27.41}  &
  \multicolumn{1}{c}{22.80}  &
  \multicolumn{1}{c}{23.86}  &
  \multicolumn{1}{c}{49.83}  &
                     63.14   &
                     9.33        &
                     \ 3.83
   \\ 
CAGrad~\cite{liu2021conflict} &
  \multicolumn{1}{c}{39.79}  &
                     65.49   &
  \multicolumn{1}{c}{0.55} &
                     0.23  &
  \multicolumn{1}{c}{26.31}  &
  \multicolumn{1}{c}{21.58}  &
  \multicolumn{1}{c}{25.61}  &
  \multicolumn{1}{c}{52.36}  &
                     65.58   &
                     6.33        &
                     \ 0.29
   \\ 
IMTL~\cite{liu2021towards} &
  \multicolumn{1}{c}{39.35}  &
                     65.60   &
  \multicolumn{1}{c}{0.54} &
                     0.23  &
  \multicolumn{1}{c}{26.02}  &
  \multicolumn{1}{c}{21.19}  &
  \multicolumn{1}{c}{26.20}  &
  \multicolumn{1}{c}{53.13}  &
                     66.24   &
                     5.56        &
                     -0.59
   \\
Nash-MTL~\cite{navon2022multi} &
  \multicolumn{1}{c}{\underline{40.13}}  &
                     65.93   &
  \multicolumn{1}{c}{\cellcolor{mygrey}0.53} &
                     0.22  &
  \multicolumn{1}{c}{25.26}  &
  \multicolumn{1}{c}{20.08}  &
  \multicolumn{1}{c}{28.40}  &
  \multicolumn{1}{c}{55.47}  &
                     68.15   &
                     \underline{3.11}        &
                     -4.04
   \\ 
FAMO~\cite{liu2024famo} &
  \multicolumn{1}{c}{\cellcolor{mygrey}40.30}  &
                     \cellcolor{mygrey}66.07   &
  \multicolumn{1}{c}{0.56} &
                     {\cellcolor{mygrey}0.21}  &
  \multicolumn{1}{c}{26.67}  &
  \multicolumn{1}{c}{21.83}  &
  \multicolumn{1}{c}{25.61}  &
  \multicolumn{1}{c}{51.78}  &
                     64.85   &
                     5.44        &
                     \ 0.16
   \\   
FairGrad~\cite{ban2024fair} &
  \multicolumn{1}{c}{39.74}  &
                     \underline{66.01}   &
  \multicolumn{1}{c}{0.54} &
                     0.22  &
  \multicolumn{1}{c}{\underline{24.84}}  &
  \multicolumn{1}{c}{19.60}  &
  \multicolumn{1}{c}{\underline{29.26}}  &
  \multicolumn{1}{c}{56.58}  &
                     69.16   &
                     \cellcolor{mygrey}3.00        &
                     \underline{-4.66}
   \\ 
   \cmidrule(r){1-12} 
\texttt{COST} &
  \multicolumn{1}{c}{38.06}  &
                     64.71   &
  \multicolumn{1}{c}{\underline{0.54}} &
                     0.23  &
  \multicolumn{1}{c}{\cellcolor{mygrey}24.47}  &
  \multicolumn{1}{c}{\cellcolor{mygrey}18.80}  &
  \multicolumn{1}{c}{\cellcolor{mygrey}30.84}  &
  \multicolumn{1}{c}{\cellcolor{mygrey}58.25}  &
                     \cellcolor{mygrey}70.30   &
                     3.22   &
                     \cellcolor{mygrey}-5.39
   \\  
   \bottomrule \hline 
\end{tabular}
\end{table*}

\subsection{Overall Evaluation}
\noindent \textbf{Dense Prediction.} CityScapes~\cite{cordts2016cityscapes} and NYUv2~\cite{silberman2012indoor} are two widely-used scene understanding datasets, which are employed for the evaluation of MTL. NYUv2 comprises 1449 annotated images and is utilized for three fine-grained tasks, i.e., semantic segmentation, depth estimation, and surface normal prediction. CityScapes consists of 5000 annotated scene images, which are readied for two fine-grained tasks: semantic segmentation and depth estimation.

In line with the implementation and training strategy of FairGrad~\cite{ban2024fair}, we construct our model using SegNet~\cite{badrinarayanan2017segnet} and employ MTAN~\cite{liu2019end} as the backbone within it. We train our model with the Adam optimizer for a total of 200 epochs, setting the initial learning rate to 1.0e-4 and reducing it to half after 100 epochs. The batch size is set to 2 for NYUv2 and 8 for CityScapes, respectively.

The results obtained on these two datasets are presented in Table~\ref{table:city} and Table~\ref{table:nyu}, respectively. With FairGrad serving as the baseline, our method not only successfully surpasses it but also attains the SOTA performance in terms of $\textbf{MR}$ and $\Delta$m\%. Specifically, upon closely examining the performance of each individual task, we can note that \texttt{COST} significantly enhances FairGrad on the CityScapes dataset and considerably improves the surface normal prediction task, while also showing some promise on the other tasks on the NYUv2 dataset. These observations clearly demonstrate the effectiveness of our design, which aids in alleviating conflict and facilitating convergence.

\begin{table}[t]
\setlength\tabcolsep{10pt}
\centering
\caption{Comparison of methods on \textit{CelebA} and \textit{QM9} datasets with MR and $\Delta$m\%. The results of FairGrad-R are reported according to the official implementation of FairGrad.}
\label{table:qm9}
\footnotesize
\begin{tabular}{lcc|cc}
\hline \toprule
\multirow{2}{*}{\textbf{Method}} & \multicolumn{2}{c}{\textbf{CelebA}} & \multicolumn{2}{c}{\textbf{QM9}} \\
\cmidrule(lr){2-3} \cmidrule(lr){4-5} 
 & \textbf{MR $\downarrow$} & \textbf{$\Delta$m\% $\downarrow$} & \textbf{MR $\downarrow$} & \textbf{$\Delta$m\% $\downarrow$} \\
\midrule
LS & 7.08 & 4.15 & 9.09 & 177.6 \\
SI & 8.80 & 7.20 & 5.64 & 77.8 \\
RLW & 5.98 & 1.46 & 10.64 & 203.8 \\
DWA & 7.78 & 2.40 & 8.91 & 175.3 \\
UW & 6.65 & 3.23 & 7.00 & 108.0 \\
MGDA & 11.98 & 14.85 & 8.91 & 120.5 \\
PCGrad & 7.58 & 3.17 & 7.36 & 125.7 \\
CAGrad & 7.13 & 2.48 & 8.09 & 112.8 \\
IMTL-G & \underline{5.53} & \cellcolor{mygrey}0.84 & 6.91 & 77.2 \\
Nash-MTL & 5.73 & 2.84 & \underline{4.27} & 62.0 \\
FAMO & 5.65 & 1.21 & 5.18 & \underline{58.5} \\
\midrule
FairGrad-R & 6.35 & 1.15 & 4.82 & 59.9 \\
\texttt{COST} & \cellcolor{mygrey}4.80 & \underline{0.93} & \cellcolor{mygrey}4.18 & \cellcolor{mygrey}58.3 \\
\bottomrule \hline
\end{tabular}
\end{table}
\noindent \textbf{Image Classification.} CelebA~\cite{liu2015deep} is a commonly utilized face attributes dataset that contains over 200,000 images and is annotated with 40 attributes. Recently, it has been adopted as a 40-task MTL benchmark to assess the model's capacity to handle a large number of tasks. In accordance with the setup of FairGrad, we utilize a 9-layer convolutional neural network (CNN) as the backbone and linear layers as the task-specific heads on top of it. We train our model with the Adam optimizer for a total of 15 epochs, setting the initial learning rate to 3.0e-4. Moreover, the batch size is set to 256.

The evaluation results are shown in Table~\ref{table:qm9}. Given that our method is mainly developed based on FairGrad, our performance is thus highly associated with it. We conscientiously re-implemented FairGrad using the official code they provided and were able to achieve the reported performance on CityScapes, NYUv2. However, we were unable to do so on CelebA and QM9. Consequently, we only report our re-implemented performance of FairGrad here (referred to as FairGrad-R). As can be observed, \texttt{COST} still significantly enhances its baseline and attains the SOTA performance according to MR, ranking second according to $\Delta$m\%. These results demonstrate \texttt{COST}'s remarkable ability to handle numerous tasks simultaneously.

\noindent \textbf{Regression.} QM9~\cite{ramakrishnan2014quantum} is another widely used MTL dataset specifically for regression tasks. It contains 130,000 organic molecules that are organized as graphs with node and edge features. This task is designed to predict 11 properties having different measurement scales and can also be considered as an evaluation scenario for MTL involving a large number of tasks.
Our approach is trained for 300 epochs with a batch size of 120. The initial learning rate is set to 1.0e-3, and a learning rate scheduler is applied to reduce the rate when the validation performance shows no further improvement.

According to Table~\ref{table:qm9}, our method still achieves competitive performance on this specific dataset. However, in comparison to other datasets, it exhibits fewer enhancements over its baseline. One crucial reason for these relatively less satisfactory results is that this task adopts a graph model with only two layers supporting LoRA in current PEFT package, which reduces its effectiveness.

\subsection{Conflict and Gradient Examinations}
Although our method achieves competitive performance, it remains unclear whether it effectively resolves the targeted issues, i.e., conflict mitigation and greater gradient norm discovery. To investigate this, we analyze the training process by recording the results before and after teleportation, as shown in Figure~\ref{fig:cft_grad_exam}. The findings indicate that conflict is significantly alleviated, with task gradients becoming positively correlated in most cases after teleportation. Besides, teleportation consistently yields greater gradient norms, confirming the effectiveness of \texttt{COST}’s design.
\begin{figure}[h]
    \centering
    \subfloat[Conflict status]{\includegraphics[width = 0.23\textwidth]{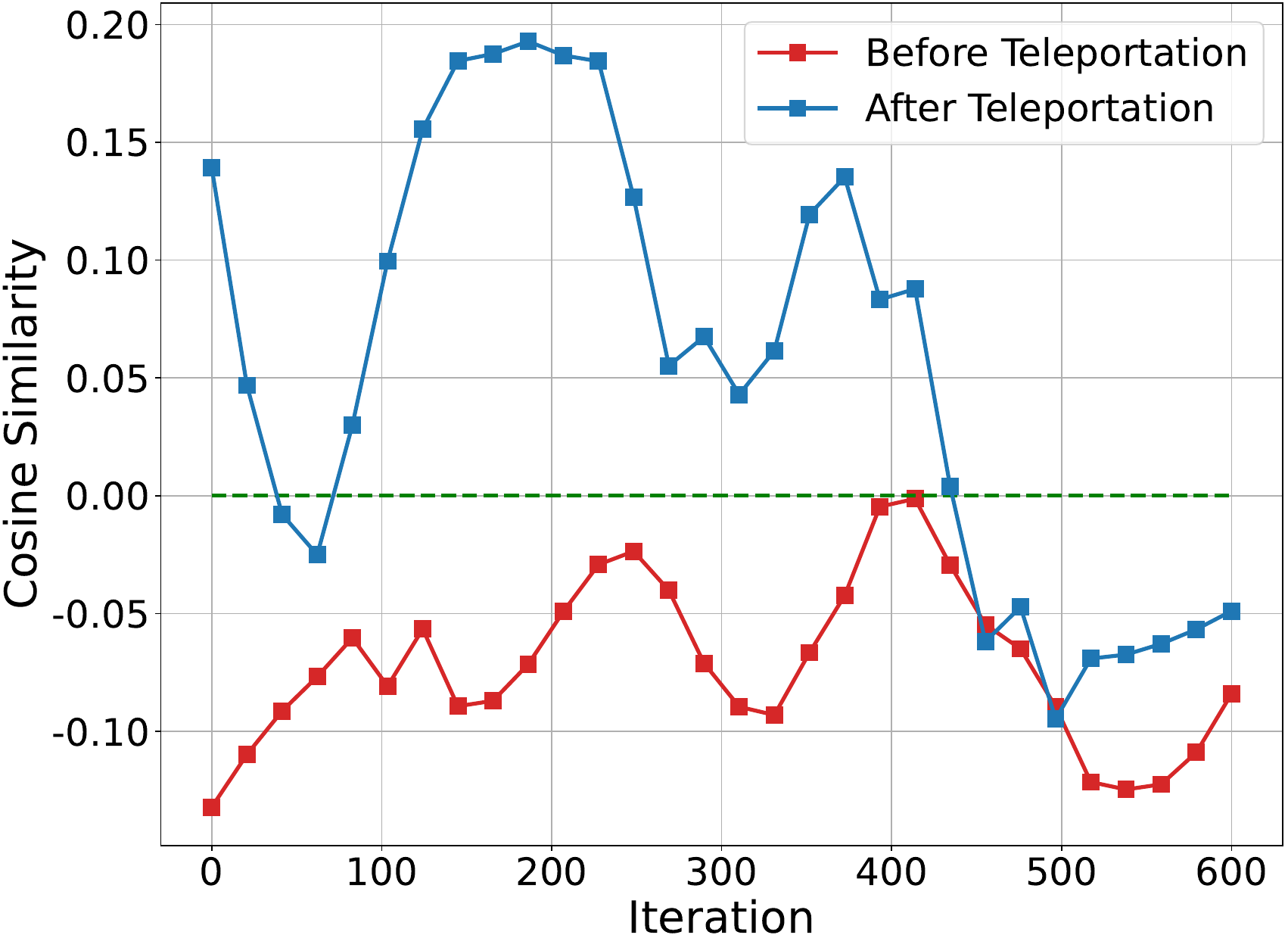}}
    \hfill
    \subfloat[Gradient norm status]{\includegraphics[width = 0.23\textwidth]{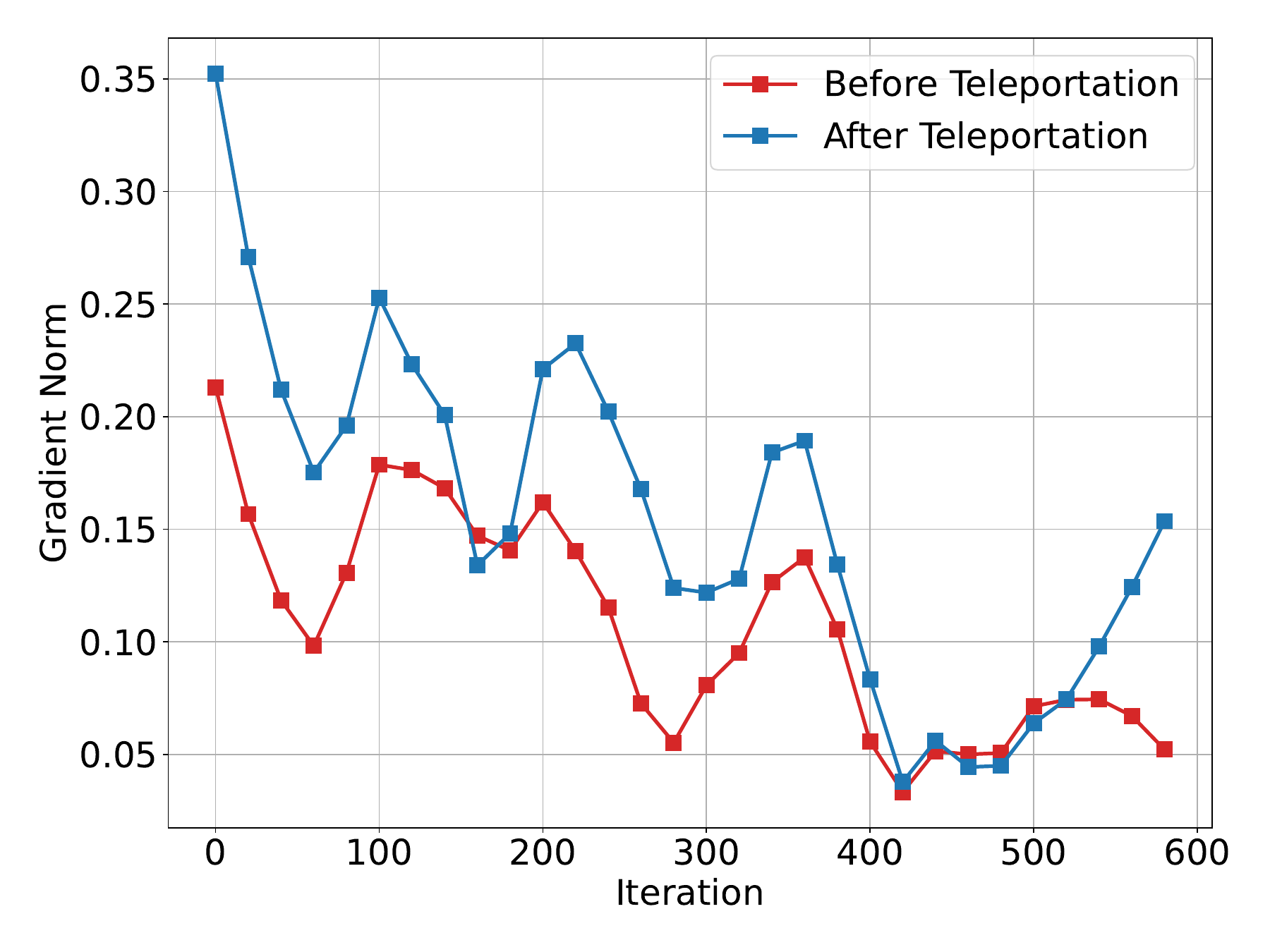}}
    \caption{Examinations before and after teleportation.}
    \label{fig:cft_grad_exam}
\end{figure}

\subsection{Ablation Study}
We consider \texttt{COST} as an integrated system, and thus each component ought to be evaluated to showcase its effectiveness. In our design, there are primarily three key components: the loss invariance objective ($\mathcal{L}_t$), the gradient maximization objective ($\mathcal{L}_g$), and the HTR strategy. Consequently, we carry out ablation studies for verification purposes and present the results in Table~\ref{table:ablation}.
In the context of symmetry teleportation, $\mathcal{L}_t$ and $\mathcal{L}_g$ serve as the foundation for seeking alternatives within the orbit. Hence, we exclude $\mathcal{L}_t$ and $\mathcal{L}_g$ during the LoRA optimization process, respectively. The results indicate that without $\mathcal{L}_t$ or $\mathcal{L}_g$, \texttt{COST} performs worse than its baseline (which is FairGrad in this case). More specifically, \texttt{COST} would experience a severe deterioration without $\mathcal{L}_t$, thereby underlining the crucial importance of loss invariance. These results might address another concern regarding \texttt{COST}, namely: Are the improvements brought about by \texttt{COST} rooted in the capability expansion facilitated by LoRA? Without an appropriate objective design, LoRA is unable to effectively augment the base models.
\begin{table}[htbp]
\centering
\caption{Ablation study of \texttt{COST} on \textit{CityScapes} (2 tasks).}
\label{table:ablation}
\begin{tabular}{ccc|c}
\hline \toprule
$\mathcal{L}_t$ & $\mathcal{L}_g$ & HTR  & \textbf{$\Delta$m\% $\downarrow$} \\ \midrule
                   &                     &                     & 5.18 \\
\checkmark         &                     &                     & 7.90 \\
-                  & \checkmark          &                     & \textbf{381.86}  \\
\checkmark         & \checkmark          &            &   4.65 \\ \midrule
\checkmark         & \checkmark          & \checkmark          & \cellcolor{mygrey}4.30 \\ \bottomrule \hline
\end{tabular}
\end{table} 

On the other hand, it should be noted that LoRA is incorporated into the base model after each teleportation. Thus, the model's capability remains unchanged during the inference time. All that we are doing is assisting in finding a better convergence point.
Furthermore, when the HTR strategy is excluded, $\Delta$m\% decreases from 4.30 to 4.65, which demonstrates the significance of benefiting from advanced optimizers. 

\subsection{Plug-and-Play Verification}
Intuitively, our method is orthogonal to existing MTL approaches and is therefore plug-and-play, enabling augmentation when integrated. Here, we take three baselines (i.e., CAGrad, Nash-MTL, and FairGrad) to demonstrate the effectiveness of \texttt{COST}, and present the results in Table~\ref{table:plug}. As anticipated, our method successfully brings considerable augmentation to its baselines, with improvements ranging from 0.88 to 3.21 according to $\Delta$m\%. Specifically, CAGrad and FairGrad receive improvements on almost each individual metric. More results please see the \textbf{Appendix} (Sec. 1.5).  
\begin{table}[ht]
\setlength\tabcolsep{2pt}
\centering
\caption{Plug-and-play verification on \textit{CityScapes} (2 tasks) dataset. We adopt FAMO's implementation for Nash-MTL (denoted as Nash-MTL-R) and augment it with \texttt{COST}, since Nash-MTL does not provide the official implementation on CityScapes. }
\label{table:plug}
\footnotesize
\begin{tabular}{lllll|l}
\hline \toprule
\multicolumn{1}{l}{\multirow{3}{*}{Method}} &
  \multicolumn{2}{c}{Segmentation} &
  \multicolumn{2}{c|}{Depth} &   
  \multicolumn{1}{c}{\multirow{3}{*}{\textbf{$\Delta$m\% $\downarrow$}}} \\ \cmidrule(r){2-3} \cmidrule(r){4-5} 
\multicolumn{1}{c}{} &
  \multicolumn{2}{c}{\multirow{1}{*}{(Higher Better)}} &
  \multicolumn{2}{c|}{\multirow{1}{*}{(Lower Better)}} &
  
\\ \cmidrule(r){2-3} \cmidrule(r){4-5} 
\multicolumn{1}{c}{} &
  \multicolumn{1}{c}{mIoU} &
  Pix. Acc. &
  \multicolumn{1}{c}{Abs. Err.} &
  Rel. Err. &
  \multicolumn{1}{c}{} \\ \cmidrule(r){1-6} 
CAGrad  &
  \multicolumn{1}{c}{75.16}  &
                     93.48   &
  \multicolumn{1}{c}{0.0141} &
                     37.60  &
                     11.58
   \\
CAGrad {\scriptsize + \texttt{COST}} &
  \multicolumn{1}{c}{\cellcolor{mygrey}75.46}  &
                     \cellcolor{mygrey}93.57   &
  \multicolumn{1}{c}{\cellcolor{mygrey}0.0134} &
                     \cellcolor{mygrey}35.68  &
                     \cellcolor{mygrey}8.37
   \\  \cmidrule(r){1-6} 
Nash-MTL-R &
  \multicolumn{1}{c}{75.87}  &
                     93.57   &
  \multicolumn{1}{c}{0.0135} &
                     37.29  &
                     9.89
   \\  
Nash-MTL-R  {\scriptsize + \texttt{COST}} &
  \multicolumn{1}{c}{75.70}  &
                   93.56   &
  \multicolumn{1}{c}{\cellcolor{mygrey}0.0134} &
                     \cellcolor{mygrey}34.34  &
                     \cellcolor{mygrey}7.15
   \\  
    \cmidrule(r){1-6} 
FairGrad &
  \multicolumn{1}{c}{75.72}  &
                     93.68   &
  \multicolumn{1}{c}{0.0134} &
                     32.25 &
                     5.18
   \\  
FairGrad {\scriptsize + \texttt{COST}} &
  \multicolumn{1}{c}{\cellcolor{mygrey}75.73}  &
                     93.53   &
  \multicolumn{1}{c}{\cellcolor{mygrey}0.0133} &
                     \cellcolor{mygrey}31.53  &
                    \cellcolor{mygrey}4.30 \\   
 \bottomrule \hline
\end{tabular}
\end{table}

\section{Conclusion}
This paper explores the MTL problem from a brand new perspective, i.e., alleviating the conflict issue through symmetry teleportation. Specifically, we utilize LoRA to achieve practical symmetry teleportation for contemporary deep models. Additionally, we design loss-invariant and gradient maximization objectives to assist in identifying non-conflict and more convergent points. We also devise a historical trajectory reuse strategy to continuously benefit from advanced optimizers.
Extensive experiments have demonstrated the effectiveness of our proposed method as well as its plug-and-play characteristic. As a scalable framework, we anticipate that our method can offer some valuable insights to researchers engaged in optimization-based MTL. Currently, there are still rooms for improvement within this system, and our future work will focus on these aspects.
{
    \small
    \bibliographystyle{ieeenat_fullname}
    \bibliography{main}
}

\end{document}


\clearpage
\setcounter{page}{1}
\maketitlesupplementary

\tableofcontents

\newpage

\section{Additional Results}
\subsection{Motivation}
To further elucidate our motivation, we additionally carry out a verification experiment to observe the dominant conflict status within Nash-MTL. Based on the results shown in Figure~\ref{app_fig:examination}, Nash-MTL exhibits fewer dominated conflicts in comparison to CAGrad and FairGrad, yet still encounters a significant number. Moreover, it also possesses symmetry points that have the same loss level but with different conflict status.
\begin{figure}[h]
    \centering
    \includegraphics[width = 0.40\textwidth]{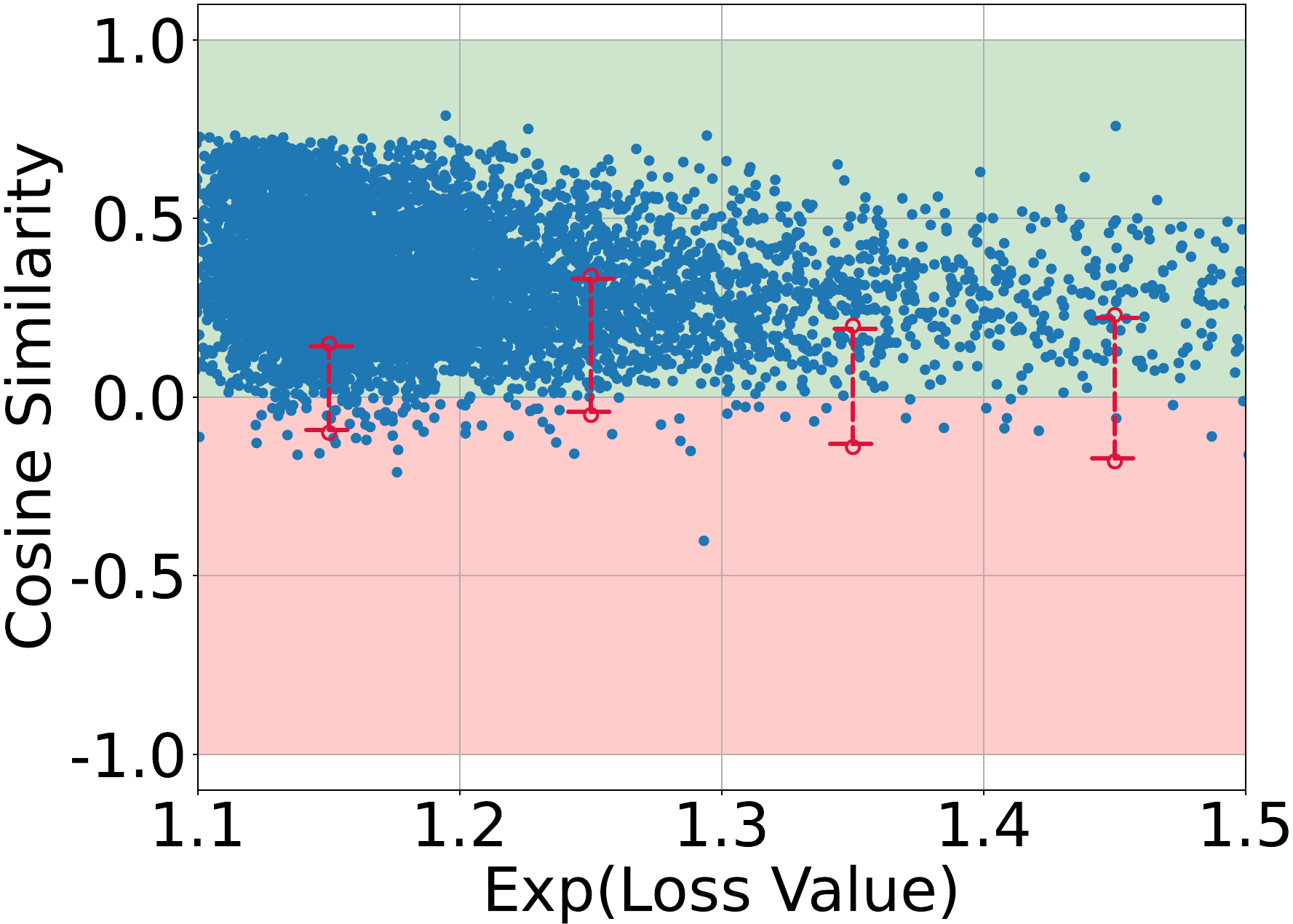}
    \caption{Dominated conflict vs. loss examination of Nash-MTL.}
    \label{app_fig:examination}
\end{figure}

\subsection{Analysis on Trigger Condition}
As stated in the main text, the trigger condition is of importance for our symmetry teleportation process. To illustrate this, we carry out a comparison between our system when triggered by dominated conflict and when triggered by weak conflict, and the results are presented in Table~\ref{table:trigger}.
It can be observed that although \texttt{COST}{\scriptsize -\texttt{weak}} surpasses \texttt{COST}{\scriptsize -\texttt{dominated}} in terms of most individual metrics, it accomplishes this by sacrificing a significant portion of the performance on the Rel. Err. metric. This is an unexpected outcome in the context of MTL. Consequently, it does not perform satisfactorily on the overall metric ($\rm \Delta m\%$).
These results further emphasize that addressing both imbalance and conflict issues is crucial for MTL. This is because the scenario of weak conflict only focuses on the conflict issue, overlooking the importance of handling imbalance as well.
\begin{table}[ht]
\setlength\tabcolsep{2pt}
\centering
\caption{Trigger condition comparison on \textit{CityScapes} (2 tasks) dataset.}
\label{table:trigger}
\footnotesize
\begin{tabular}{lllll|l}
\hline \toprule
\multicolumn{1}{l}{\multirow{3}{*}{Method}} &
  \multicolumn{2}{c}{Segmentation} &
  \multicolumn{2}{c|}{Depth} &   
  \multicolumn{1}{c}{\multirow{3}{*}{\textbf{$\Delta$m\% $\downarrow$}}} \\ \cmidrule(r){2-3} \cmidrule(r){4-5} 
\multicolumn{1}{c}{} &
  \multicolumn{2}{c}{\multirow{1}{*}{(Higher Better)}} &
  \multicolumn{2}{c|}{\multirow{1}{*}{(Lower Better)}} &
  
\\ \cmidrule(r){2-3} \cmidrule(r){4-5} 
\multicolumn{1}{c}{} &
  \multicolumn{1}{c}{mIoU} &
  Pix. Acc. &
  \multicolumn{1}{c}{Abs. Err.} &
  Rel. Err. &
  \multicolumn{1}{c}{} \\ \cmidrule(r){1-6} 
\texttt{COST} {\scriptsize -\texttt{dominated}} &
  \multicolumn{1}{c}{75.73}  &
                     93.53   &
  \multicolumn{1}{c}{0.0133} &
                     \cellcolor{mygrey}31.53  &
                    \cellcolor{mygrey}4.30 \\   
\texttt{COST} {\scriptsize -\texttt{weak}} &
  \multicolumn{1}{c}{\cellcolor{mygrey}75.92}  &
                     \cellcolor{mygrey}93.64   &
  \multicolumn{1}{c}{\cellcolor{mygrey}0.0127} &
                     35.94  &
                    6.94 \\   
 \bottomrule \hline
\end{tabular}
\end{table}

We also conduct an additional experiment on the trigger condition in the presence of numerous tasks, following Eqn.~4 in the main text. As previously stated, this condition is designed to balance effectiveness and efficiency—relaxing it would improve performance but at the cost of increased inefficiency. The results, presented in Figure~\ref{app_fig:trigger}, confirm this trade-off. As shown, $\rm \Delta m\%$ gradually decreases as the trigger condition is relaxed, with improvements driven by more frequent teleportation. However, this comes at the cost of an almost linear increase in time complexity. 
\begin{figure}[h]
    \centering
    \includegraphics[width = 0.40\textwidth]{ICCV2025-Author-Kit-Feb/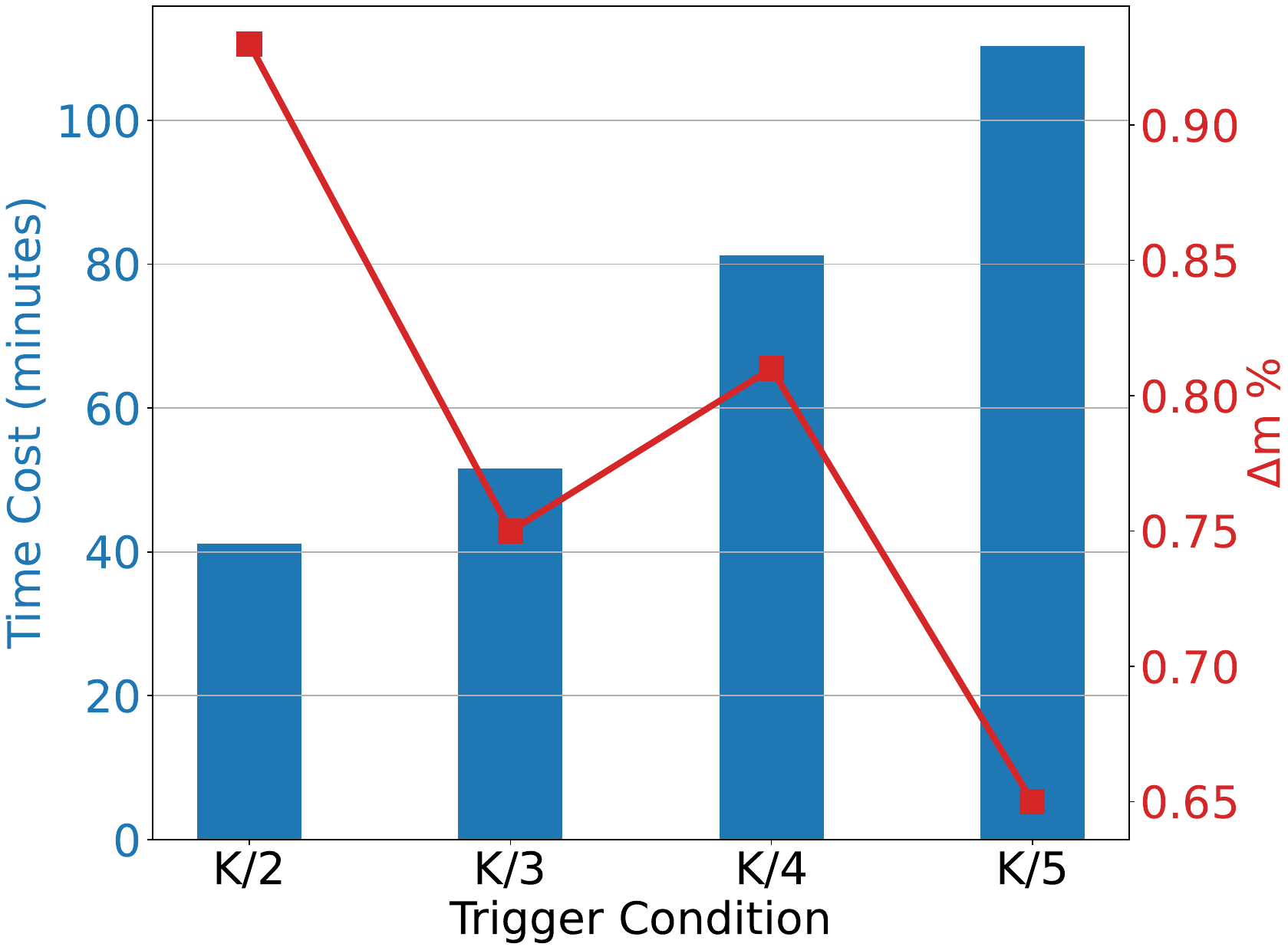}
    \caption{Trigger condition under the massive tasks scenario. Time cost is measured for one epoch on CelebA.}
    \label{app_fig:trigger}
\end{figure}

\subsection{Analysis on Alternative of PEFT} \label{sec:peft_alter}
Currently, numerous PEFT alternatives are available for teleportation purposes. To further examine the effect of PEFT on our method, we employ additional PEFT options to assess their impact on MTL performance. Specifically, we evaluate a LoRA variant (LoHa) and another PEFT alternative, OFT. The results, presented in Table~\ref{table:peft}, show that neither PEFT option improves upon their baselines. This suggests that while advanced PEFT methods may enable more efficient tuning, their complex designs can limit generalizability across various scenarios, aligning with some recent observations~\cite{pu2023empirical}. Identifying a suitable PEFT approach remains a future direction for our framework.

The obtained results demonstrate that our framework can indeed benefits from certain other PEFT alternatives, e.g., LoHa. However, it also encounters setbacks when using alternatives like OFT. This implies that the selection of the PEFT method warrants further investigation.
\begin{table}[h]
\setlength\tabcolsep{3pt}
\centering
\caption{PEFT alternatives comparison on \textit{CityScapes} (2 tasks) dataset.}
\label{table:peft}
\footnotesize
\begin{tabular}{lllll|l}
\hline \toprule
\multicolumn{1}{l}{\multirow{3}{*}{Method}} &
  \multicolumn{2}{c}{Segmentation} &
  \multicolumn{2}{c|}{Depth} &   
  \multicolumn{1}{c}{\multirow{3}{*}{\textbf{$\Delta$m\% $\downarrow$}}} \\ \cmidrule(r){2-3} \cmidrule(r){4-5} 
\multicolumn{1}{c}{} &
  \multicolumn{2}{c}{\multirow{1}{*}{(Higher Better)}} &
  \multicolumn{2}{c|}{\multirow{1}{*}{(Lower Better)}} &
  
\\ \cmidrule(r){2-3} \cmidrule(r){4-5}
\multicolumn{1}{c}{} &
  \multicolumn{1}{c}{mIoU} &
  Pix. Acc. &
  \multicolumn{1}{c}{Abs. Err.} &
  Rel. Err. &
  \multicolumn{1}{c}{} \\ \cmidrule(r){1-6} 
FairGrad &
  \multicolumn{1}{c}{75.72}  &
                     93.68   &
  \multicolumn{1}{c}{0.0134} &
                     32.25 &
                     5.18
   \\  
\texttt{COST}{\scriptsize-LoRA} &
  \multicolumn{1}{c}{\cellcolor{mygrey}75.73}  &
                     93.53   &
  \multicolumn{1}{c}{\cellcolor{mygrey}0.0133} &
                     \cellcolor{mygrey}31.53  &
                     \cellcolor{mygrey}4.30 \\   
\texttt{COST}{\scriptsize-LoHa~\cite{hyeon2021fedpara}} &
  \multicolumn{1}{c}{75.52}  &
                     93.43   &
  \multicolumn{1}{c}{\cellcolor{mygrey}0.0130} &
                     35.07  &
                     7.10 \\ 
\texttt{COST}{\scriptsize-OFT~\cite{qiu2023controlling}} &
  \multicolumn{1}{c}{68.17}  &
                     91.40   &
  \multicolumn{1}{c}{0.0151} &
                     45.25  &
                     23.39 \\ 
 \bottomrule \hline
\end{tabular}
\end{table}



\subsection{Time Cost}
Applying teleportation at every instance of a conflict would significantly increase the computational burden and training time. To mitigate this, we introduce two strategies: delayed start and frequency control. The delayed start strategy postpones the application of teleportation until after $E$ epochs. Meanwhile, frequency control limits the number of teleportation operations within each epoch, reducing overhead without much compromising the optimization process.

Here, we measure the running time of a single epoch on CelebA, comparing the scenarios with and without the \texttt{COST} augmentation, and present the results in Figure~\ref{app_fig:time}. As can be observed, our applied strategies introduce only an additional 30\% of the training time compared to its baselines. Nonetheless, we acknowledge this still constitutes one of our limitations.

\begin{figure}[h]
    \centering
    \includegraphics[width = 0.40\textwidth]{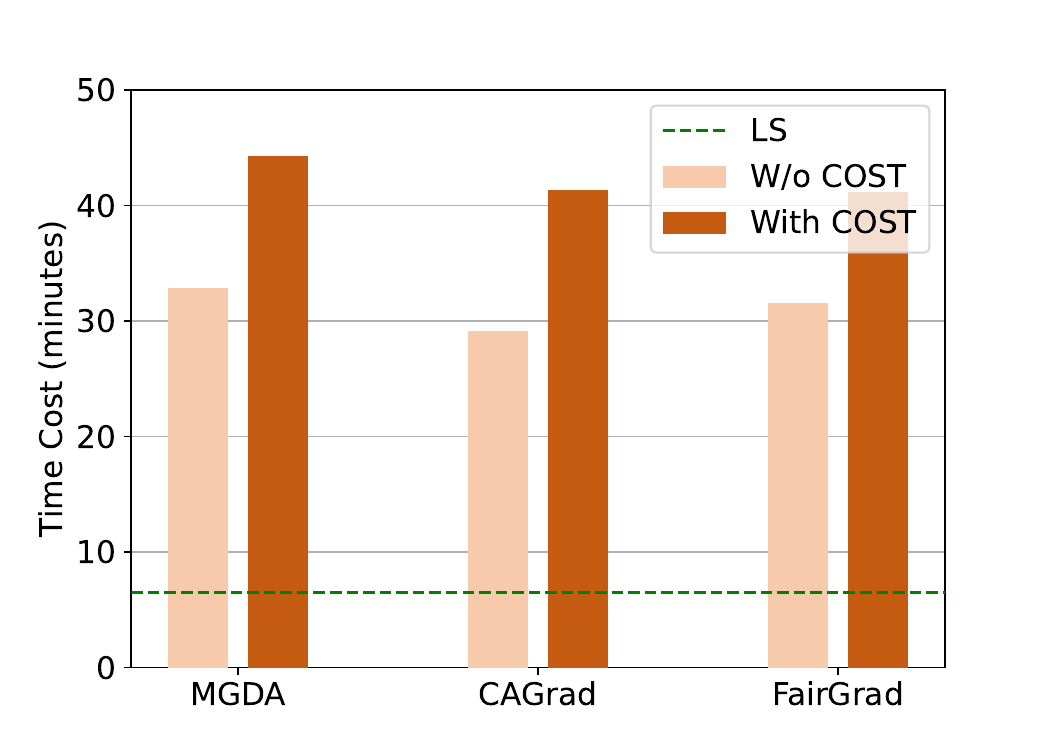}
    \caption{Time cost comparison.}
    \label{app_fig:time}
\end{figure}

\subsection{Additional Plug-and-Play Verification}
To further demonstrate the plug-and-play capability of \texttt{COST}, we conduct an additional verification experiment using CAGrad on the NYUv2 dataset. The results, presented in Table~\ref{table:nyu_plug}, show that \texttt{COST} consistently enhances CAGrad's performance across all metrics.
\begin{table*}[h]
\centering
\caption{Plug-and-play verification on \textit{NYUv2} (3 tasks).}
\label{table:nyu_plug}
\small
\begin{tabular}{llllllllll|l}
\hline \toprule
\multicolumn{1}{l}{\multirow{4}{*}{Method}} &
  \multicolumn{2}{c}{Segmentation} &
  \multicolumn{2}{c}{Depth} &
  \multicolumn{5}{c|}{Surface Normal} &   
  \multicolumn{1}{c}{\multirow{4}{*}{\textbf{$\Delta$m\% $\downarrow$}}} \\ \cmidrule(r){2-3} \cmidrule(r){4-5} \cmidrule(r){6-7}\cmidrule(r){8-10} 
\multicolumn{1}{c}{} &
  \multicolumn{2}{c}{\multirow{2}{*}{(Higher Better)}} &
  \multicolumn{2}{c}{\multirow{2}{*}{(Lower Better)}} &
  \multicolumn{2}{c}{Angle Distance}  &
  \multicolumn{3}{c|}{Within $t^\circ$} &
  \multicolumn{1}{c}{} \\ \cmidrule(r){6-7}\cmidrule(r){8-10} 
\multicolumn{1}{c}{} &
  \multicolumn{2}{c}{} &
  \multicolumn{2}{c}{} &
  \multicolumn{2}{c}{(Lower Better)} &
  \multicolumn{3}{c|}{(Higher Better)} &
  \multicolumn{1}{c}{} \\ \cmidrule(r){2-3} \cmidrule(r){4-5} \cmidrule(r){6-7}\cmidrule(r){8-10} 
\multicolumn{1}{c}{} &
  \multicolumn{1}{c}{mIoU} &
  Pix. Acc. &
  \multicolumn{1}{c}{Abs Err} &
  Rel Err &
  \multicolumn{1}{c}{Mean} &
  \multicolumn{1}{c}{Median} &
  \multicolumn{1}{c}{11.25} &
  \multicolumn{1}{c}{22.5} &
  30 &
  \multicolumn{1}{c}{} \\ \cmidrule(r){1-11}
CAGrad~\cite{liu2021conflict} &
  \multicolumn{1}{c}{39.79}  &
                     65.49   &
  \multicolumn{1}{c}{0.55} &
                     0.23  &
  \multicolumn{1}{c}{26.31}  &
  \multicolumn{1}{c}{21.58}  &
  \multicolumn{1}{c}{25.61}  &
  \multicolumn{1}{c}{52.36}  &
                     65.58   &
                     \ 0.29
   \\ 
CAGrad + \texttt{COST} &
  \multicolumn{1}{c}{\cellcolor{mygrey}40.76}  &
                     \cellcolor{mygrey}66.42   &
  \multicolumn{1}{c}{\cellcolor{mygrey}0.53} &
                     \cellcolor{mygrey}0.22  &
  \multicolumn{1}{c}{\cellcolor{mygrey}26.00}  &
  \multicolumn{1}{c}{\cellcolor{mygrey}21.13}  &
  \multicolumn{1}{c}{\cellcolor{mygrey}26.44}  &
  \multicolumn{1}{c}{\cellcolor{mygrey}53.25}  &
                     \cellcolor{mygrey}66.30   &
                     \ \cellcolor{mygrey}-1.61
   \\ \cmidrule(r){1-11}
FairGrad~\cite{ban2024fair} &
  \multicolumn{1}{c}{39.74}  &
                     66.01   &
  \multicolumn{1}{c}{0.54} &
                     0.22  &
  \multicolumn{1}{c}{24.84}  &
  \multicolumn{1}{c}{19.60}  &
  \multicolumn{1}{c}{29.26}  &
  \multicolumn{1}{c}{56.58}  &
                     69.16   &
                     -4.66
   \\ 
FairGrad + \texttt{COST} &
  \multicolumn{1}{c}{38.06}  &
                     64.71   &
  \multicolumn{1}{c}{\cellcolor{mygrey}0.54} &
                     0.23  &
  \multicolumn{1}{c}{\cellcolor{mygrey}24.47}  &
  \multicolumn{1}{c}{\cellcolor{mygrey}18.80}  &
  \multicolumn{1}{c}{\cellcolor{mygrey}30.84}  &
  \multicolumn{1}{c}{\cellcolor{mygrey}58.25}  &
                     \cellcolor{mygrey}0.30   &
                     \cellcolor{mygrey}-5.39
   \\  
   \bottomrule \hline 
\end{tabular}
\end{table*}

\section{Implementation Details}
\subsection{Baseline Implementation}
\noindent \textbf{CAGrad}: CAGrad strikes a balance between Pareto optimality and globe convergence by regulating the combined gradients in proximity to the average gradient:
\begin{align} \label{app_eqn:cagrad}
    \underset{\bm{d} \in \mathbb{R}^m}{\rm max}\underset{\bm{\omega} \in \mathcal{W}}{\rm min}  \bm{g_{\omega}^{\top}}\bm{d}  \ \ \ \ {\rm s.t.} \left \| \bm{d} - \bm{g_0} \right \| \le c\left \| \bm{g_0} \right \|   
\end{align}

In our experiments, we extend the official implementation to include \texttt{COST}. And all our training settings follow the same manner with its original one. For more comprehensive information, please consult the official implementation. 

\noindent \textbf{Nash-MTL}: Nash-MTL provides the individual progress guarantee via the following objective:
\begin{align} \label{eqn:nash}
    \min_{\bm{\omega}} \sum_i \beta_i(\bm{\omega}) + \varphi_i(\bm{\omega}) \\
    s.t. \forall i, - \varphi_i(\bm{\omega}) \le 0, \ \ \ \omega_i > 0.
\end{align}
where $\varphi_i(\bm{\omega}) = \log(\omega_i) + \log(\bm{g_i^{\top}}\bm{G}\bm{\omega}), \ \bm{G} = [\bm{g_1}, \bm{g_2}, ..., \bm{g_K}]$. As demonstrated, the individual progress is ensured through the projection, subject to the constraint $\beta_i = \bm{g_i^{\top}}\bm{G}\bm{\omega} \ge \frac{1}{\omega_i}$. 

We utilize the FAMO's implementation of Nash-MTL, since Nash-MTL's official code does not provide the implementation on CityScapes (2 tasks). In the Table 4 of the main text, we honestly report the results of this implementation, and show improvements brought by \texttt{COST}. 

\noindent \textbf{FairGrad}: FairGrad is a pioneer MTL algorithm that proposes fairness measurements to promote maximal loss decrease, and formulate the following objective to derive the combination of individual gradients:
\begin{align}
    \bm{G}^\top \bm{G} \bm{\omega} = \bm{\omega}^{-1/\alpha}
\end{align}
where $\alpha$ is the hyper-parameter. We regard FairGrad as the advanced version of Nash-MTL that is able to balance task progresses in a finer grained. We adopt its official implementation for all our implementation through the paper. 

\subsection{PEFT Implementation}
\noindent \textbf{LoHa}: As a variant of LoRA, it approximates large weight matrices with low-rank ones through the Hadamard product. This approach has the potential to be more parameter-efficient than LoRA itself.

\noindent \textbf{OFT}: OFT draws inspiration from continual learning. It operates by re-parameterizing the pre-trained weight matrices using its orthogonal matrix, thereby preserving the information within the pre-trained model. To decrease the number of parameters, OFT incorporates a block-diagonal structure into the orthogonal matrix.

We utilize the implementations of LoRA, LoHa, and OFT provided by Hugging Face's PEFT. The rank for each of these is set to 5, while other configurations are left at their default values. We apply LoRA to the backbone network across three datasets (CityScapes, NYUv2, and CelebA). For the QM9 benchmark, since it employs a graph network which is currently not supported by PEFT, we only apply LoRA to input and output linear layers instead.

\begin{algorithm}[h]
	\caption{\texttt{COST} for MTL}
	\label{alg:algorithm1}
	\KwIn{Model parameters $\bm{\theta^0}$}  
	\BlankLine

	Initialize Initialize $\bm{\theta^0}$ randomly 
 
	\For{$t = 1$ to $T$}{
             Compute task gradients $G = [\bm{g_i}]_{i=1}^K$ 
             
             \eIf{\textnormal{Dominated conflict detected}}{
             Freeze $\bm{\theta^t}$ and train LoRA ($\bm{\Delta \theta^t}$) according Eqn.7, 9, 10, and 11 in the main text;
             
             Merge $\bm{\theta^t}$ and $\bm{\Delta \theta^t}$: $\bm{\theta^t} = \bm{\theta^t} + \bm{\Delta \theta^t}$, and unfreeze $\bm{\theta^t}$\;

             Apply HTR on the optimizer according to Eqn.12, and 13 in the main text;
             }{
             Other MTL optimization\; 

             \If{\textnormal{Have applied HTR}}{
                  Reset $\sigma$ to 1;
             }
             }
    }
\end{algorithm}
\section{Algorithm}
We conclude the learning paradigm of \texttt{COST} in Algorithm~\ref{alg:algorithm1}. It should be noted that since \texttt{COST} is a scalable framework, thus the other MTL optimization in Algorithm~\ref{alg:algorithm1} could be mainstream MTL approaches (e.g., CAGrad, Nash-MTL, and FairGrad, etc). 

\section{Convergence Analysis}
\label{sec:convergence}

 \begin{lemma} \label{lemma:1}
 Let $\mathcal{L}(\bm{\theta}, \xi)$ be a $\Lambda$-smooth function, where $\xi$ is the i.i.d sampled mini-batch data. It follows that:
 \begin{align}
      \mathbb{E} & \left[\left\lVert \nabla \mathcal{L}(\bm{\theta}, \xi)\right\rVert ^{2}\right] \leq \nonumber\\
     & 2 \Lambda\left(\mathcal{L}(\bm{\theta})-\mathcal{L}\left(\bm{\theta}^{*}\right)\right)+2 \Lambda\left(\mathcal{L}\left(\bm{\theta}^{*}\right)-\mathbb{E}\left[\inf _{\bm{\theta}} \mathcal{L}(\bm{\theta}, \xi)\right]\right) \nonumber
 \end{align}
 \end{lemma}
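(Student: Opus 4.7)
The plan is to reduce the stochastic gradient bound to a deterministic consequence of smoothness applied pointwise in $\xi$, then take expectations and split the gap with respect to $\mathcal{L}(\bm{\theta}^*)$.

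First, I would recall the standard inequality that for any $\Lambda$-smooth function $f$ with $f^\star := \inf_{\bm{\theta}} f(\bm{\theta}) > -\infty$, one has $\|\nabla f(\bm{\theta})\|^2 \leq 2\Lambda\bigl(f(\bm{\theta}) - f^\star\bigr)$. The quickest route is the descent lemma: $\Lambda$-smoothness gives $f(\bm{\theta} - \tfrac{1}{\Lambda}\nabla f(\bm{\theta})) \leq f(\bm{\theta}) - \tfrac{1}{2\Lambda}\|\nabla f(\bm{\theta})\|^2$, and the left-hand side is lower bounded by $f^\star$, which rearranges to the claim. I would apply this to the mini-batch loss $\mathcal{L}(\cdot, \xi)$ for each realization of $\xi$ (using the assumption that $\mathcal{L}(\cdot,\xi)$ is itself $\Lambda$-smooth in $\bm{\theta}$), yielding
\begin{equation*}
\|\nabla \mathcal{L}(\bm{\theta}, \xi)\|^2 \;\leq\; 2\Lambda\Bigl(\mathcal{L}(\bm{\theta}, \xi) - \inf_{\bm{\theta}} \mathcal{L}(\bm{\theta}, \xi)\Bigr).
\end{equation*}

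Next, I would take expectation over the i.i.d.\ mini-batch $\xi$ on both sides. Using $\mathbb{E}[\mathcal{L}(\bm{\theta}, \xi)] = \mathcal{L}(\bm{\theta})$ (the population loss is the expectation of the mini-batch loss), this produces
\begin{equation*}
\mathbb{E}\bigl[\|\nabla \mathcal{L}(\bm{\theta}, \xi)\|^2\bigr] \;\leq\; 2\Lambda\Bigl(\mathcal{L}(\bm{\theta}) - \mathbb{E}\bigl[\inf_{\bm{\theta}} \mathcal{L}(\bm{\theta}, \xi)\bigr]\Bigr).
\end{equation*}
Finally, I would add and subtract $\mathcal{L}(\bm{\theta}^*)$ inside the parentheses to split the gap into the two claimed terms, giving exactly the statement of the lemma.

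The only non-routine step is justifying the pointwise bound $\|\nabla f\|^2 \leq 2\Lambda(f - f^\star)$; I expect this to be the one place a reader might pause, so I would spell out the descent-lemma calculation explicitly. Interchanging expectation and gradient (implicit in treating $\nabla \mathcal{L}(\bm{\theta},\xi)$ as the per-sample gradient) is standard under the smoothness hypothesis and does not require additional work. Everything else is algebraic rearrangement.
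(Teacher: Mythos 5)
Your proposal is correct and follows essentially the same route as the paper: both derive the pointwise bound $\lVert \nabla \mathcal{L}(\bm{\theta},\xi)\rVert^2 \le 2\Lambda\bigl(\mathcal{L}(\bm{\theta},\xi) - \inf_{\bm{\theta}}\mathcal{L}(\bm{\theta},\xi)\bigr)$ from the descent lemma at $\bm{\theta} - \tfrac{1}{\Lambda}\nabla\mathcal{L}(\bm{\theta},\xi)$, take expectation over $\xi$, and split the gap by adding and subtracting $\mathcal{L}(\bm{\theta}^*)$. The paper merely threads $\bm{\theta}^*$ through the chain of inequalities before taking expectation rather than after, which is a cosmetic difference; your version is if anything cleaner.
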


\begin{proof}
We have the following inequality according to the $\Lambda$-smooth property of $\mathcal{L}(\bm{\theta}, \xi)$:
\begin{align}
    \mathcal{L}&(\bm{\theta'}, \xi) -\mathcal{L}(\bm{\theta}, \xi) \leq \\ & \langle\nabla \mathcal{L}(\bm{\theta}, \xi), \bm{\theta'}  - \bm{\theta}\rangle + \frac{\Lambda}{2}\left\lVert \bm{\theta'} - \bm{\theta}\right\rVert ^{2}, \forall \bm{\theta'}, \bm{\theta} \in \mathbb{R}^{d} \nonumber
\end{align}
And $\bm{\theta'} = \bm{\theta} - \frac{1}{\Lambda}\nabla \mathcal{L}(\bm{\theta}, \xi)$, thus we have:
\begin{align}
    \mathcal{L}(\bm{\theta} - (1 / \Lambda)\nabla \mathcal{L}(\bm{\theta}, \xi), \xi) \leq \mathcal{L}(\bm{\theta}, \xi) - \frac{1}{2\Lambda}\left\lVert \nabla \mathcal{L}(\bm{\theta}, \xi)\right\rVert ^{2}
\end{align}

Assume $\bm{\theta}^{*} = \underset{\bm{\theta}\in\mathbb{R}^{d}}{\arg\min}\ \mathcal{L}(\bm{\theta})$, then we can re-arranging the above inequality to have:
\begin{align}
    &\mathcal{L}(\bm{\theta}^{*}, \xi) -\mathcal{L}(\bm{\theta}, \xi) = \\
    & \mathcal{L}(\bm{\theta}^{*}, \xi)-\inf_{\bm{\theta}} \mathcal{L}(\bm{\theta}, \xi)+\inf_{\bm{\theta}} \mathcal{L}(\bm{\theta}, \xi)-\mathcal{L}(\bm{\theta}, \xi) \nonumber \\
    & \le \mathcal{L}(\bm{\theta}^{*}, \xi)-\inf_{\bm{\theta}} \mathcal{L}(\bm{\theta}, \xi) + \mathcal{L}(\bm{\theta} - \frac{1}{\Lambda}\nabla \mathcal{L}(\bm{\theta},\xi), \xi) - \mathcal{L}(\bm{\theta}, \xi)  \nonumber  \\
    & \le \mathcal{L}(\bm{\theta}^{*}, \xi)-\inf_{\bm{\theta}} \mathcal{L}(\bm{\theta}, \xi) - \frac{1}{2\Lambda}\left\lVert \nabla \mathcal{L}(\bm{\theta}, \xi)\right\rVert ^{2} \nonumber
\end{align}
where the first inequality holds because $\inf_{\bm{\theta}} \mathcal{L}(\bm{\theta}, \xi) \le \mathcal{L}(\bm{\theta}, \xi), \forall \bm{\theta}$. Taking expectation on above gives:
\begin{align}
    \mathbb{E}&\left[\left\lVert \nabla \mathcal{L}(\bm{\theta}, \xi)\right\rVert ^{2}\right] \nonumber \\ &\leq 2 \mathbb{E}\left[\Lambda\left(\mathcal{L}(\bm{\theta}^{*}, \xi)-\inf_{\bm{\theta}} \mathcal{L}(\bm{\theta}, \xi)+\mathcal{L}(\bm{\theta}, \xi)-\mathcal{L}(\bm{\theta}^{*}, \xi)\right)\right] \nonumber \\
    & \leq 2\Lambda\mathbb{E}\left[\mathcal{L}(\bm{\theta}^{*}, \xi)-\inf_{\bm{\theta}} \mathcal{L}(\bm{\theta}, \xi)+\mathcal{L}(\bm{\theta}, \xi)-\mathcal{L}(\bm{\theta}^{*}, \xi)\right] \nonumber \\
    & \leq 2\Lambda\left(\mathcal{L}(\bm{\theta})-\mathcal{L}(\bm{\theta}^{*})\right)+2\Lambda\left(\mathcal{L}(\bm{\theta}^{*})-\mathbb{E}\left[\inf_{\bm{\theta}} \mathcal{L}(\bm{\theta}, \xi)\right]\right) \nonumber
\end{align}

\end{proof}

%
 \begin{theorem} Assume task loss functions $\mathcal{L}_1, ..., \mathcal{L}_{K}$ are differentiable and $\Lambda$-smooth ($\Lambda$$>$0) such that $\left \| \nabla\mathcal{L}_i(\bm{\theta_1}) - \nabla\mathcal{L}_i(\bm{\theta_2}) \right \| \le \Lambda \left \| \bm{\theta_1} - \bm{\theta_2} \right \| $ for any two points $\bm{\theta_1}$, $\bm{\theta_2}$, and our symmetry teleportation property holds. Set the step size as $\eta = \frac{1}{\Lambda \sqrt{T-1}}$, T is the training iteration. Then, there exists a subsequence $\{\bm{\theta^{t_j}}\}$ of the output sequence $\{\bm{\theta^t}\}$ that converges to a Pareto stationary point $\bm{\theta^*}$.
 \end{theorem}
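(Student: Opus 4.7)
The plan is to convert the theorem's descent-direction structure into a telescoping inequality, then use Lemma~\ref{lemma:1} to control mini-batch variance, and finally extract a Pareto-stationary subsequential limit. First I would fix the weighting scheme $\bm{\omega}^t$ that the underlying MTL solver (CAGrad, Nash-MTL, or FairGrad, as in Algorithm~\ref{alg:algorithm1}) produces at iterate $\bm{\theta}^t$, and define the scalarized surrogate $\mathcal{L}_{\bm{\omega}^t}(\bm{\theta}) = \sum_i \omega_i^t \mathcal{L}_i(\bm{\theta})$. Since each $\mathcal{L}_i$ is $\Lambda$-smooth and $\bm{\omega}^t$ lies on the simplex, $\mathcal{L}_{\bm{\omega}^t}$ is also $\Lambda$-smooth, and the quadratic upper bound yields
\begin{equation}
\mathcal{L}_{\bm{\omega}^t}(\bm{\theta}^{t+1}) \le \mathcal{L}_{\bm{\omega}^t}(\bm{\theta}^t) - \eta\,\langle \bm{g}_{\bm{\omega}^t}, \bm{d}^t\rangle + \tfrac{\Lambda \eta^2}{2}\lVert \bm{d}^t\rVert^2,
\end{equation}
where $\bm{d}^t$ is the actual update direction returned by the MTL solver. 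With $\eta = 1/(\Lambda\sqrt{T-1})$ the curvature term is strictly dominated, giving a net per-step descent proportional to $\eta\lVert \bm{g}_{\bm{\omega}^t}\rVert^2$.

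Next I would absorb the teleportation step into this descent story. The symmetry-teleportation property assumed in the hypothesis guarantees that replacing $\bm{\theta}^t$ by $\bm{\theta}^t + \bm{\Delta\bm{\theta}}^t$ preserves every per-task loss (indeed, Eqn.~7,~9,~10,~11 in the main text enforce loss equality up to the HTR-controlled slack). Consequently the inequality above remains valid across teleportation events, and telescoping over $t=0,\dots,T-1$ gives
\begin{equation}
\sum_{t=0}^{T-1} \eta\,\lVert \bm{g}_{\bm{\omega}^t}\rVert^2 \le 2\bigl(\mathcal{L}(\bm{\theta}^0) - \inf_{\bm{\theta}} \mathcal{L}(\bm{\theta})\bigr).
\end{equation}
Substituting $\eta = 1/(\Lambda\sqrt{T-1})$ forces $\min_{0\le t<T} \lVert \bm{g}_{\bm{\omega}^t}\rVert^2 = \mathcal{O}(\Lambda/\sqrt{T-1}) \to 0$. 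In the stochastic regime I would invoke Lemma~\ref{lemma:1} to pass from the mini-batch gradient norm to the true expected gradient norm, folding the infimum-gap term into the right-hand side of the telescoped bound.

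The final step is subsequential extraction. Along the iterates that realize the minimum at each $T$, Bolzano--Weierstrass applied to the bounded weight vectors $\bm{\omega}^{t_j}$ on the simplex (and to $\bm{\theta}^{t_j}$ in a sublevel set of $\mathcal{L}$) yields limits $\bm{\omega}^\star$ and $\bm{\theta}^\star$ with $\sum_i \omega_i^\star \nabla\mathcal{L}_i(\bm{\theta}^\star) = 0$ and $\omega_i^\star \ge 0$, which is precisely Pareto stationarity. The main obstacle I expect is the teleportation step: the sequence $\{\bm{\theta}^t\}$ is not produced by pure gradient descent, so the descent inequality must be shown to survive the parameter jump. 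Concretely, I would need the symmetry-teleportation property to deliver both \textbf{(i)} non-increase of every $\mathcal{L}_i$ across the merge operation, so no positive jumps corrupt the telescoping sum, and \textbf{(ii)} uniform boundedness of the post-teleportation weights $\bm{\omega}^t$ inside the relative interior (or at least a compact subset) of the simplex, so that the limiting $\bm{\omega}^\star$ remains a valid Pareto-stationarity certificate rather than a degenerate one. Nailing down both of these properties from the precise form of the HTR update is where the genuine work lies.
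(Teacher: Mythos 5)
Your proposal follows essentially the same route as the paper's proof: a $\Lambda$-smoothness descent inequality, loss invariance of the teleportation step $\bm{\theta^{t'}}=\bm{\theta^t}+\bm{\Delta \theta^t}$ so that the descent recursion survives the parameter jump, Lemma~\ref{lemma:1} to control the second moment of the mini-batch gradient, and the step size $\eta=\frac{1}{\Lambda\sqrt{T-1}}$ to turn the telescoped sum into an $\mathcal{O}(1/\sqrt{T-1})$ bound on $\min_t\mathbb{E}\left[\|\nabla\mathcal{L}(\bm{\theta^{t'}})\|^2\right]$, from which Pareto stationarity is read off via the definition $\min_{\bm{\omega}}\|\bm{g}_{\bm{\omega}}\|=0$. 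Two details diverge and deserve attention. First, once Lemma~\ref{lemma:1} is inserted, the recursion carries a multiplicative factor $(1+\Lambda^2\eta^2)$ on $\mathbb{E}[\mathcal{L}(\bm{\theta^t})-\mathcal{L}^*]$, so the clean telescoped bound $\sum_t\eta\|\bm{g}_{\bm{\omega}^t}\|^2\le 2(\mathcal{L}(\bm{\theta}^0)-\inf_{\bm{\theta}}\mathcal{L}(\bm{\theta}))$ you write down is not immediate; the paper handles this amplification with the Stich re-weighting sequence $\gamma_t(1+\Lambda^2\eta_t^2)=\gamma_{t-1}$ and only then divides by $\sum_t\gamma_t\eta_t$. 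With constant $\eta=1/(\Lambda\sqrt{T-1})$ the blow-up is only $(1+1/(T-1))^T\le e^2$, so your version is patchable, but the factor must appear somewhere. Second, you scalarize with the time-varying weights $\bm{\omega}^t$, which makes your sum genuinely non-telescoping: $\mathcal{L}_{\bm{\omega}^t}(\bm{\theta}^{t+1})\ne\mathcal{L}_{\bm{\omega}^{t+1}}(\bm{\theta}^{t+1})$, and the cross terms $\sum_i(\omega_i^{t+1}-\omega_i^t)\mathcal{L}_i(\bm{\theta}^{t+1})$ need separate control (e.g., bounded losses plus summable weight variation). The paper sidesteps this entirely by running the whole argument on a single scalar loss $\mathcal{L}$, which hides rather than resolves the issue; likewise it never supplies the Bolzano--Weierstrass extraction or the non-degeneracy check on the limiting $\bm{\omega}^\star$ that you correctly flag as the remaining work. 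In short: same skeleton, with your write-up more explicit about the multi-task bookkeeping and the paper more explicit about the re-weighting arithmetic.
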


 \begin{proof}
We have the following inequality according to the $\Lambda$-smooth property of $\mathcal{L}(\bm{\theta})$:
\begin{align}
    \mathcal{L}(\bm{\theta'}) -\mathcal{L}(\bm{\theta}) \leq \langle\nabla \mathcal{L}(\bm{\theta}), \bm{\theta'}  - \bm{\theta}\rangle + \frac{\Lambda}{2}\left\lVert \bm{\theta'} - \bm{\theta}\right\rVert ^{2} 
\end{align}
Let $\bm{\theta'} = \bm{\theta^{t+1}}$, $\bm{\theta^{t'}} = \bm{\theta^t} + \bm{\Delta \theta^t}$ (\underline{Gradient maximization}: $\bm{\Delta \theta^t} = \operatorname{argmax} \nabla \mathcal{L}(\bm{\theta^{t}} + \bm{\Delta \theta^t})$), and $\mathcal{L}(\bm{\theta^t}) = \mathcal{L}(\bm{\theta^{t'}})$ (\underline{Loss invariance}), we have:
\begin{align}
    &\mathcal{L}(\bm{\theta^{t+1}})  \leq \mathcal{L}(\bm{\theta^{t'}}) + \langle\nabla \mathcal{L}(\bm{\theta^{t'}}), \bm{\theta^{t+1}} - \bm{\theta^{t'}}\rangle \\
    & + \frac{\Lambda}{2}\left\lVert \bm{\theta^{t+1}} - \bm{\theta^{t'}}\right\rVert ^{2} \\
    & =\mathcal{L}(\bm{\theta^{t}}) - \eta_{t}\langle\nabla \mathcal{L}(\bm{\theta^{t'}}), \nabla \mathcal{L}(\bm{\theta^{t'}}, \xi^{t})\rangle  + \frac{\Lambda \eta_{t}^{2}}{2}\left\lVert \nabla \mathcal{L}(\bm{\theta^{t'}}, \xi^{t})\right\rVert ^{2}
\end{align}
Taking expectation conditioned on $\bm{\theta^t}$, we have:
\begin{align} \label{eqn:exp_bef_lemma}
    \mathbb{E}_{t}\left[\mathcal{L}(\bm{\theta^{t+1}})\right] &\leq \mathcal{L}(\bm{\theta^{t}})-\eta_{t}\left\lVert\nabla\mathcal{L}(\bm{\theta^{t'}})\right\rVert^{2} \\ 
    & +\frac{\Lambda\eta_{t}^{2}}{2}\mathbb{E}_{t}\left[\left\lVert\nabla\mathcal{L}(\bm{\theta^{t'}},\xi^{t})\right\rVert^{2}\right] \nonumber
\end{align}
According to Lemma~\ref{lemma:1}, we have:
 \begin{align} \label{eqn:lemma1}
      \mathbb{E} & \left[\left\lVert \nabla \mathcal{L}(\bm{\theta}, \xi)\right\rVert ^{2}\right] \leq \nonumber\\
     & 2 \Lambda\left(\mathcal{L}(\bm{\theta})-\mathcal{L}\left(\bm{\theta}^{*}\right)\right)+2 \Lambda\left(\mathcal{L}\left(\bm{\theta}^{*}\right)-\mathbb{E}\left[\inf _{\bm{\theta}} \mathcal{L}(\bm{\theta}, \xi)\right]\right) 
 \end{align}
 Inserting Eqn.~\ref{eqn:lemma1} into Eqn.~\ref{eqn:exp_bef_lemma}, we have:
 \begin{align}
     &\mathbb{E}_{t}\left[\mathcal{L}(\bm{\theta^{t+1}})\right] \leq \mathcal{L}(\bm{\theta^{t}})-\eta_{t}\left\lVert\nabla\mathcal{L}(\bm{\theta^{t'}})\right\rVert^{2} \\
     & + \Lambda^2 \eta_{t}^{2} \left( \mathcal{L}(\bm{\theta^{t'}}) - \mathcal{L}(\bm{\theta^{*}}) + \mathcal{L}(\bm{\theta^{*}}) - \mathbb{E}\left[\inf _{\bm{\theta}} \mathcal{L}(\bm{\theta}, \xi)\right]\right) \nonumber
 \end{align}
By taking full expectation and re-arranging terms, we have:
\begin{align}  \label{eqn:18}
    \eta_{t}\mathbb{E}\left[\left\lVert\nabla\mathcal{L}(\bm{\theta^{t'}})\right\rVert^{2}\right]\leq&(1+\Lambda^{2}\eta_{t}^{2})\mathbb{E}\left[\mathcal{L}(\bm{\theta^t})-\mathcal{L}^{*}\right]\\
    &-\mathbb{E}\left[\mathcal{L}(\bm{\theta}^{t + 1})-\mathcal{L}^{*}\right]+\Lambda^{2}\eta_{t}^{2}\sigma^{2} \nonumber
\end{align}
where $\sigma^{2}=\mathcal{L}(\bm{\theta^{*}}) - \mathbb{E}\left[\inf _{\bm{\theta}} \mathcal{L}(\bm{\theta}, \xi)\right]$. Then we consider to introduce the re-weighting trick in~\cite{stich2019unified}. Let $\gamma_t$ ($\gamma_t > 0$) be a sequence such that $\gamma_t(1 + \Lambda^2\eta_{t}^{2}) = \gamma_{t-1}$. Assume $\gamma_{-1} = 1$, then $\gamma_{t} = {1 + \Lambda^2\eta_{t}^{2}}^{-(t+1)}$. By multiplying $\gamma_t$ on both sides of Eqn.~\ref{eqn:18}, we have:
\begin{align}
    \gamma_{t}\eta_{t}\mathbb{E}\left[\left\lVert\nabla\mathcal{L}(\bm{\theta^{t'}})\right\rVert^{2}\right]&\leq\gamma_{t-1}\mathbb{E}\left[\mathcal{L}(\bm{\theta^t})-\mathcal{L}^{*}\right]\\
    &-\gamma_{t}\mathbb{E}\left[\mathcal{L}(\bm{\theta}^{t + 1})-\mathcal{L}^{*}\right]+\gamma_{t}\Lambda^{2}\eta_{t}^{2}\sigma^{2} \nonumber
\end{align}
Summing up the above equation from $t = 0,..., T-1$, we have:
\begin{align}
\sum_{t=0}^{T-1} \gamma_t \eta_t \mathbb{E} \left[ \| \nabla \mathcal{L} (\bm{\theta^{t'}}) \|^2 \right] \leq &\mathbb{E} \left[ \mathcal{L}(\bm{\theta^0}) - \mathcal{L}^* \right] \\ &+ \Lambda^2 \sigma^2 \sum_{t=0}^{T-1} \gamma_t \eta_t^2 \nonumber
\end{align}
Dividing both sides by $\sum_{t=0}^{T-1} \gamma_t \eta_t^2$, we have:
\begin{align} \label{eqn:bound1}
    \min_{t = 0,\ldots,T - 1}&\mathbb{E}\left[\left\lVert\nabla\mathcal{L}(\bm{\theta^{t'}})\right\rVert^{2}\right]\\&\leq\frac{1}{\sum_{t = 0}^{T - 1}\gamma_{t}\eta_{t}}\sum_{t = 0}^{T - 1}\gamma_{t}\eta_{t}\left\lVert\nabla\mathcal{L}(\bm{\theta^{t'}})\right\rVert^{2} \nonumber \\
    &\leq \frac{\mathbb{E} \left[ \mathcal{L}(\bm{\theta^0}) - \mathcal{L}^* \right] + \Lambda^2 \sigma^2 \sum_{t=0}^{T-1} \gamma_t \eta_t^2}{\sum_{t=0}^{T-1} \gamma_t \eta_t} \nonumber
\end{align}
Assume $\eta_t \equiv \eta$, then we have:
\begin{align}
    \sum_{t=0}^{T-1} \gamma_t \eta_t &= \eta \sum_{t=0}^{T-1} (1 + \Lambda^2\gamma_t^2)^{-(t+1)} \\&= \frac{\gamma}{1+\Lambda^2\eta^2}\frac{1 - (1+\Lambda^2\eta^2)^{-T}}{1 - (1+\Lambda^2\eta^2)^{-1}} \nonumber \\
    &=\frac{1 - (1+\Lambda^2\eta^2)^{-T}}{\Lambda^2\eta} \nonumber
\end{align}
Note that $(1+\Lambda^2\eta^2)^{-T}\leq \frac{1}{2}$ and $\frac{x}{1+x} \le \log (1+x)$, thus we have 
\begin{align}
    \frac{\log (2)}{\log (1+\Lambda^2\eta^2)} \le \frac{\log (2)(1+\Lambda^2\eta^2)}{\Lambda^2\eta^2} \le T
\end{align}
From this, we can obtain:
\begin{align}
    \sum_{t=0}^{T-1} \gamma_t \eta_t \ge \frac{1}{2\Lambda^2\eta}, {\rm for}\ T \ge \frac{\log (2)(1+\Lambda^2\eta^2)}{\Lambda^2\eta^2} 
\end{align}
Inserting the above equation into the Eqn.~\ref{eqn:bound1}, we have:
\begin{align}
        &\min_{t = 0,\ldots,T - 1}\mathbb{E}\left[\left\lVert\nabla\mathcal{L}(\bm{\theta^{t'}})\right\rVert^{2}\right]\\&\leq 2\Lambda^2\eta\mathbb{E}\left[\mathcal{L}(\bm{\theta^0}) - \mathcal{L}*\right] + \eta \Lambda^2\sigma^2, \ {\rm for}\ T\ge\frac{\log (2)(1+\Lambda^2\eta^2)}{\Lambda^2\eta^2}
\end{align}
 Setting $\eta = \frac{1}{\Lambda\sqrt{T-1}}$, we finally have:
 \begin{align}
        &\min_{t = 0,\ldots,T - 1}\mathbb{E}\left[\left\lVert\nabla\mathcal{L}(\bm{\theta^{t'}})\right\rVert^{2}\right]\\&\leq \frac{2\Lambda}{\sqrt{T-1}}\mathbb{E}\left[\mathcal{L}(\bm{\theta^0}) - \mathcal{L}*\right] + \frac{\Lambda\sigma^2}{\sqrt{T-1}}
 \end{align}
 Thus, our method can readily reach to the Pareto Stationary point.
 \end{proof}

\section{Discussion \& Limitation}
We offer a new framework to address the challenges of MTL, which is highly scalable and can be further improved by integrating more advanced components. For instance, LoRA could be substituted with an alternative PEFT method, and sharpness estimation can be substituted with some efficient gradient estimation methods~\cite{liugeneral}. However, there are still some limitations. The current training paradigm requires additional training costs, and its performance on regression tasks is less competitive compared to the others. We have discussed some of these limitations in the Appendix, while others are left for future work.

\section{Pareto Concept}
Formally, let us assume the weighted loss as $\mathcal{L}_{\bm{\omega}} = \sum_{i=1}^{K}\omega_i \mathcal{L}_i(\bm{\theta})$, where $\bm{\omega} \in \bm{\mathcal{W}}$ and $\bm{\mathcal{W}}$ represents the probability simplex on $[K]$. A point $\bm{\theta'}$ is said to Pareto dominate $\bm{\theta}$ if and only if $\forall i, \mathcal{L}_i(\bm{\theta'}) \leq \mathcal{L}_i(\bm{\theta})$. Consequently, the Pareto optimal situation arises when no $\bm{\theta'}$ can be found that satisfies $\forall i, \mathcal{L}_i(\bm{\theta'}) \leq \mathcal{L}_i(\bm{\theta})$ for the given point $\bm{\theta}$. All points that meet these conditions are referred to as Pareto sets, and their solutions are known as Pareto fronts. Another concept, known as Pareto stationary, requires ${\rm{min}}_{\bm{\omega} \in \bm{\mathcal{W}}} \left \| \bm{g}_{\bm{\omega}}\right \| = 0$, where $\bm{g}_{\bm{\omega}}$ represents the weighted gradient $\bm{\omega}^{\top}\bm{G}$, and $\bm{G}$ is the gradients matrix whose each row is an individual gradient. We also provide the definition of gradient similarity for ease of description.

{
    \small
    \bibliographystyle{ieeenat_fullname}
    \bibliography{main}
}
